\newtheorem{theorem}{Theorem}[section]
\newtheorem{definition}[theorem]{Definition}
\newtheorem{remark}[theorem]{Remark}
\newtheorem*{conjecture*}{Conjecture}
\newtheoremstyle{nonindented}{1ex}{1ex}{}{}{\bfseries}{.}{.5em}{}
\newtheoremstyle{indented}{1ex}{1ex}{\itshape\addtolength{\leftskip}{0.6cm}\addtolength{\rightskip}{0.6cm}}{}{\bfseries}{.}{.5em}{}
\theoremstyle{nonindented}
\theoremstyle{indented}
\theoremstyle{plain}
\newcommand{\union}{\cup}
\newcommand{\intersect}{\cap}
\renewcommand{\tilde}{\widetilde}
\def\min{\qopname\relax n{min}}
\def\max{\qopname\relax n{max}}
\newcommand{\RR}{\mathbb{R}}
\def\H{\mathcal{H}}
\def\N{\mathcal{N}}
\def\eps{\epsilon}
\newcommand{\one}{{\bf 1}}
\newcommand{\norm}[1]{\left\lVert#1\right\rVert}
\newenvironment{lp*}{\begin{equation*}  \begin{array}{lll}}{\end{array}\end{equation*}}
\definecolor{orange}{rgb}{1,0.5,0}
\definecolor{darkspringgreen}{rgb}{0.09, 0.45, 0.27}
\newtheorem*{theorem*}{Theorem}
\newtheorem*{proposition*}{Proposition}
\newcommand{\LSC}{LSC\xspace}
\newcommand{\mono}{\textrm{mono}}
\title{Least Square Calibration for Peer Reviews}
\author{
Sijun Tan\\
Department of Computer Science \\
University of Virginia\\
Charlottesville, VA 22903 \\
\texttt{st8eu@virginia.edu}
\And
Jibang Wu \\
Department of Computer Science \\
University of Virginia\\
Charlottesville, VA 22903 \\
\texttt{jw7jb@virginia.edu}
\And
Xiaohui Bei\\
School of Physical and Mathematical Sciences\\
Nanyang Technological University\\
Singapore 637371 \\
\texttt{xhbei@ntu.edu.sg}
\And
Haifeng Xu\\
Department of Computer Science \\
University of Virginia\\
Charlottesville, VA 22903 \\
\texttt{hx4ad@virginia.edu}
}
\begin{document}

\maketitle

\begin{abstract}
    Peer review systems such as conference paper review often suffer from the issue of miscalibration.
    Previous works on peer review calibration usually only use the ordinal information or assume simplistic reviewer scoring functions such as linear functions.
    In practice, applications like academic conferences often rely on manual methods, such as open discussions, to mitigate miscalibration.
    It remains an important question to develop algorithms that can   handle different types of miscalibrations based on available prior knowledge.
    In this paper, we propose a flexible framework, namely \emph{least square calibration} (LSC), for selecting top candidates from peer ratings.
    Our framework provably performs perfect calibration from noiseless linear scoring functions under mild assumptions, yet also provides competitive calibration results when the scoring function is from broader classes beyond linear functions and with arbitrary noise.
    On our synthetic dataset, we empirically demonstrate that our algorithm consistently outperforms the baseline which select top papers based on the highest average ratings.
\end{abstract}

 % !TEX root = main.tex

\section{Introduction} \label{sec:intro}

Peer review has been essential to the advancement of academic research. In a typical conference peer review process, the submitted papers are assigned to a set of external reviewers and will receive cardinal scores from them. The final decision of which papers get accepted to the conference is usually determined by the average score received for each paper.
However, during such a process, different reviewers may have different standards and biases in their evaluations. For example, the same paper may receive a high score (e.g., 8 out of 10) from a lenient reviewer and a low score (e.g., 2 out of 10) from some strict reviewer. As a result, the average score may not always reflect the true quality of a paper. This issue is known as \emph{miscalibration}, and is a prevalent problem in many other online review systems as well such as TripAdvisor or Yelp.
In order to ensure a fair and unbiased outcome, it is crucial to the concept and to the success of a peer reviewing system to have an efficient and effective calibration algorithm.

% For example, most conferences will have one or more rounds of open discussions among the reviewers to calibrate the scores.

% That is, different reviewers may have a different standard for the same scores. Since cardinal scores often does not reflect the true qualities of the papers, it makes it difficult for the committee to accept papers with the best qualities.
% It is therefore well-motivated to design algorithms' that can handle such miscalibration.

There has been a body of works attempting to address the miscalibration issue in peer reviews. Most of these works can be categorized into two directions.
The first line of works relies on making simplifying assumptions on the form of miscalibration. For example, \cite{Paul81,Baba13,Ge11, Mackay17,Wang20Debiasing} model the miscalibration as one-parameter additive bias for each reviewer; \cite{Paul81,Roos11} assume each reviewer's scoring function to be a linear function. Many of these works are able to accurately learn the unknown parameters of the miscalibration due to its simple structure.
However, these strong structural assumptions on the miscalibration are often too strong for these results to be widely applicable.
The second line of research~\citep{Rokeach68,Freund03,Harzing09,Mitliagkas11,Negahban12,Ammar12} allows the miscalibration to be arbitrary, and only relies on ordinal ranking information obtained from cardinal scores to calibrate.
% This approach has certain advantages with its minimal assumption on the nature of miscalibration. However, it is also inflexible, in the sense that it cannot utilize any new information
A recent work~\cite{Wang18} bridges these two lines of research by showing that even when the reviewers' functions are arbitrary, there exist cardinal score estimators that perform better than any ordinal ranking estimators. However, their work is still pertained to worst case assumptions, and is restricted to the scenario where each reviewer only reviews a pair of items. It remains an important open problem to design calibration models and algorithms that can work with weaker simplifying assumptions (e.g non-linear functions) in a practical scenario.

{\bf Summary of Contributions. }
In this work, we propose a new optimization-driven unsupervised learning framework, \emph{least square calibration} (\LSC).  As the name suggests, the framework leverages the natural and powerful least square method for handling different types of miscalibrations in peer reviews.  
A key strength of our framework is its \emph{flexibility} in incorporating various levels of prior knowledge, which consequently results in different qualities of calibration. Specifically, we capture the system's prior knowledge about  reviewers' scoring  as a \emph{hypothesis class} of scoring functions.   Depending on how much the system knows about reviewers, the hypothesis class can be as ``rough'' as all monotone functions, or as specific as   linear functions,   convex/concave functions, Lipschitz continuous functions,  or even their mixtures.  Our framework can be tailored to accommodate all these hypothesis classes and moreover, cope with the reviewers' noisy perception of true item qualities.
 
%\jw{Should we describe a bit more about the noisy setting? e.g. explicitly point out the challenge that review scores can have inconsistent relative orders (i.e., two reviewers rank the same pair of items in different orders). } \hf{maybe not   at this place as that does not highlight your contributions much except making it more technical for readers to follow. } 

We demonstrate the effectiveness of the \LSC framework both theoretically and empirically.  We start our theoretical analysis  from the hypothesis class of \emph{linear} scoring functions. When reviewers' scores are generated by \emph{arbitrary} linear functions of true item qualities  \emph{without} noise, our main result is a necessary and sufficient characterization about when LSC can (essentially) perfectly recover true item qualities. Interestingly, it turns out that perfect recovery is  fully captured by a novel  graph connectivity notion for a carefully constructed reviewer graph, which we coin \emph{double connectivity}. This result reveals useful insights about what kind of reviewer-item assignments may be good for calibration, which is further justified in our experiments. When reviewers perceive item qualities with noises, we show how to reduce the LSC framework to a tractable convex optimization program when scoring functions are linear. We then extend this technique to accommodate other function classes  including monotone, convex, or Lipschitz continuous functions.  Our theoretical analysis is concluded with a discussion about the connection of LSC to other  well-known  problems such as regression  and matrix seriation.  

Empirically,  we test the performance of LSC via extensive experiments. We show that LSC consistently outperforms previous methods in various problem setups. The experiments also demonstrate that the calibration quality of LSC improves as our prior knowledge improves and LSC  is robust to reasonable amount of misspecified prior knowledge. 
%We also empirically demonstrate our model's ability to extend to a wider class of functions (e.g convex/concave) and their mixtures. Our empirical results demonstrate the importance of correctly identifying the reviewer's function class. By leveraging prior knowledge of the reviewer's function class, We show that our framework's performance can be significantly boosted if we can leverage prior knowledge of the reviewer's function class. While correctly identifying the reviewer's class is itself a challenging task and is beyond the scope of our paper, 
We hope these encouraging findings can motivate more future studies about how to identify more accurate class   of  reviewers' scoring functions.

\section{Problem Formulation}
In this section, we will first formally describe the peer review problem and then introduce our   calibration approach.  We use conference peer review as an illustrative example, though our model and methods can be trivially generalized to many other peer review systems.

% \hf{Change $\N$ to $N$, $[M]$ to $M$. Typically use basic capital letter for integers. }
{\bf The Peer Review Problem. } In a peer review process such as an academic conference, there are $N$ items and $M$ reviewers. By convention, we use $[N]=\{ 1,\cdots, N\}$ to denote the set of items and $[M] = \{ 1, \cdots, M\}$ to denote the set of reviewers. We assumes that each item has an inherent ground truth quality $x^*(i)$. Throughout the paper, the superscript notation ``$*$''  will always be used to denote \emph{true} qualities. For simplicity, we assume that there are no ties and the item qualities are all distinct from each other. %Since we will only be interested in the \emph{rank} of these papers, without loss of generality, we assume $\sum_{i=1}^N x^*(i) = 0$ since shifting all papers' qualities by a constant will not change their ranks. Moreover, we assume that all $x^*(i)$ are different from each other, so there will be no ties among these papers.   \hf{make sure notations are later are consistent with this}.
%Without loss of generality, let us assume
%\begin{align*}
%	x(1) < x(2) < \cdots < x(n)
%\end{align*}
% so that item $n$ has the highest value.
Each reviewer $j$ reviews a subset of the items $I_j \subseteq [N]$. For now, we allow $I_j$ to be an arbitrary subset, which can even be a singleton or the entire set of items $[N]$. We will specify later under what conditions about $\{ I_j\}_{j\in[M]}$ our algorithms will work.   %When the reviewer size is equal to the paper size, this would mean that each paper will get reviewed by exactly k reviewers.
We denote $I^\ell_j \in [N]$ as the \emph{index} of the $\ell$'th item reviewed by reviewer $j$.
% Recall that,  $x^*(I^\ell_j)$ is the paper true quality of paper $I^\ell_j$.

% \hf{Let us distinguish between the term "quality" and "score". Quality is $x^*(i)$ and score is the $y$ gives by reviewers. Also please change all "rating" to "scores", since rating gives us the impression that it is discrete like 1, 2, 3, but scores are continuous which is what you are assuming.  }

{\bf Scoring Functions and Score Generations. } \label{sec:score-generate} Each reviewer $j\in [M]$ is characterized by a monotone \emph{scoring function} $f_j: \RR \to \RR$.
%\hf{I changed range from $[0,1]$ to be $\RR$. If we want to study linear scoring functions, we cannot restrict to [0,1], right?} \jeff{That's right/} %Without loss of generality, here we normalized the \emph{scores} to be within  range  $[0,1]$.
Like most previous works in this space  \cite{Ge11,Roos11,Baba13,Mackay17,Wang20Debiasing}, we assume that reviewers' scores are generated by applying her scoring function to a \emph{noisy} perception of the item's true quality. Formally, reviewer $j$'s score about item $I_j^\ell$, denoted by   $y_j^\ell $, is  generated as follows
\begin{equation}\label{eq:score-generate}
 \text{Score Generation: } \quad y_j^\ell  := f_j (x^* ({I_j^\ell}) + \epsilon_j^\ell)
\end{equation}
where $f_j$ is reviewer $j$'s scoring function and $\epsilon_j^\ell$ is an independent zero-mean  noise. 
%\footnote{An alternative way to model the score generation is  $y_j^\ell  := f_j (x^* ({I_j^\ell}) ) + \epsilon_j^\ell $ where the noise term is added \emph{after} the score function is applied. Equation \eqref{eq:score-generate} emphasizes that  the randomness during review comes from noisy perception of the true quality, which is then mapped to a score due to possibly different calibration measures of different reviewers. The alternative model above captures the situation where the reviewer is able to apply calibration measure to true item quality, after which noise is added. Though both choices are well-motivated, it appears to us that form \eqref{eq:score-generate}  could be a more suitable capture of  the reality. Moreover, as we will show later, the score generation function of Form  \eqref{eq:score-generate}  leads to more effective algorithms.} 
For convenience, we always assume the scores for each reviewer $j$ is ordered increasingly such that $y_j^1 < y_j^2 \cdots < y_j^{|I_j|}$.

%Equation \eqref{eq:score-generate} directly leads to a bounded score, which is common in reality. However,  the score generated by the alternative modeling choice is not bounded due to the additional noise term.  Thirdly,
% Again, assume without loss of generality that
%\begin{align*}
%	y_j^\ell > y_j^{l-1} \quad \forall j \in [M], \forall l \in [2, k]
%\end{align*}
% As a convention, we always assume $f_j$ is monotone non-decreasing and  the derivative of $f_j$ is upper bounded by some known constant $C$ --- i.e., the reviewer will not  score two extremely close papers significantly differently.
{\bf Design Goal. }
We are interested in designing an item selection algorithm.
The algorithm takes as input the item assignment $\{I_j\}_{j \in [N]}$ for a set of items $[M]$ and a set of reviewers $[N]$,    the reviewers' scores for each of their assigned item $\{y_j^\ell\}_{j \in [M], \ell \in [I_j]}$, and an additional threshold parameter $n \leq N$. The goal of the algorithm is to output a set $S$ of $n$ items that best matches the top $n$ items according to the (unknown) true qualities  $x^*$. In the experimental section, we will describe multiple metrics to evaluate the performance of our algorithm.

% where each $\epsilon_j^\ell$ is an \emph{unknown} independent zero-mean   noise, with unknown true qualities, a , each with an unknown scoring function $f_j (j \in [N])$,

%\hf{let us change top $k$ papers to top $n$ papers so that this notations are consistent with $N$. Please make changes in other places. } Let set $S \subseteq [N]$ with cardinality $n$ denote the subset of papers an algorithm identified.
% Unsurprisingly,  we cannot hope to exactly identify the top $n$ papers in general.

% \hf{Any other measures?} \jeff{We currently only use these two measures.}

% the $\N$ papers, the program committee will accept a certain portion of the papers with the goal of maximizing the sum of true ratings of the accepted paper pool, i.e., the top $k$ papers of the highest true ratings. Ideally, if the true ratings are accessible to the committee, then it can simply select papers with the highest true ratings. However, in a practical scenario, the true rating is hidden. We model a nature that determines the true ratings of the papers as well as their assignment. The nature also pick a rating function for each reviewer, subject to the convexity-concavity constraint mentioned above. The algorithm designer observe the assignment, as well as the reviewers' ratings on the papers. Given the information, the algorithm designer then select a certain portion of the papers that has the highest true ratings.

{\bf Restricted Classes of Scoring Functions. } 
Note that when defining   reviewers' scoring functions, we do not make any assumptions on the properties of these functions other than monotonicity. This is to ensure that our framework is flexible  and can handle a wide range of problems.

However, in many situations, the designer may have some additional prior knowledge
% In fact, we have already made the assumption of monotonicity (like many previous works \cite{Roos11,Wang18,Wang20}\todo{add citations}), which is essentially our prior knowledge about how reviewers would \emph{typically} score their perceived paper quality.
and infer more structures about the scoring functions. The reason of assuming some restricted class of scoring functions is multifold. First, such functions may be a good proxy or approximation of the true scoring functions based on our  prior knowledge. Second, it may help us to avoid the issue of \emph{overfitting}\footnote{Overfitting here is in the similar sense to degenerated cases such as setting $k=n$ in k-means clustering of $n$ samples, or in low-rank matrix completion where $k$ is the target rank, $n$ is the matrix size.} since allowing arbitrary scoring functions may be able to fit the review scores well but leads to  poor generalization and make the result prone to perception noise. 
% For example, with very intricate scoring functions we may be able to explain very well the scores of the reviewed papers, but the scores for the unreviewed papers predicted by this intricate score function may be highly inaccurate.
For example, many previous works~\cite{Roos11, Ge11, Baba13, Mackay17} have focused on the basic model of \emph{linear} scoring functions, defined as follows.

\begin{definition}[Linear Scoring Function]
	A linear scoring function  of reviewer $j$ has two parameters: $a_j \geq 0$ and $b_j \in \RR$ and is of the form $f_j (\tilde{x}(=x+\epsilon)) = a_j \tilde{x} + b_j$.
\end{definition}

 In addition to linear scoring functions, one may also consider broader class of scoring functions  when less prior knowledge is available or when we want to make less stringent assumptions.  {For example, our framework can also handle scoring functions that are convex or concave or a mixture of these functions.\footnote{A  scoring function $f$ is  \textit{convex} if for any $x_1, x_2  \in \RR$ and for any $ t \in (0,1)$, we have $f(t x_1 + (1-t)x_2) \leq tf(x_1)+(1-t)f(x_2)$. Function $f$ is concave if  $-f$ is convex.}} 
% \jw{Is this red part needed? This example does not correspond to "when less prior knowledge is available"}  
 However, we emphasize that it is \emph{not} our purpose to   justify which of these classes of scoring functions are more realistic since this answer would depend on the concrete domains of the problem. Instead,  our goal is to develop a generic learning framework that can integrate any of such prior knowledge to improve the  calibration. That being said, we believe it is an important future research direction to design heuristics, regularizations and other methods to choose the hypothesis class for better calibration results as well as to understand the potential  issue of overfitting for hypothesis class that are  too broad.  
%
%\begin{definition}
%	A function $f:\RR \rightarrow \RR$ is called \textbf{concave} if $\forall x, y \in \RR$, $\forall t \in (0,1)$, we have $f(t(x) + (1-t)y) \geq tf(x)+(1-t)f(y)$
%\end{definition}
%\begin{definition}
%	A function $f:\RR \rightarrow \RR$ is called \textbf{convex-concave} if it is \textbf{convex} on $[-\infty, 0]$ and \textbf{concave} on $[0, \infty]$
%\end{definition}

%The "convex-concave" function is a non-parametric, non-linear function, a much weaker simplyfying assumption than that in the prior work. Intuitively, this assumption captures reviewers' tendency to offer extreme ratings. %find relavant literature to support it%

 % !TEX root = main.tex

\section{The Optimization-Driven Framework}\label{sec:model}
Central to our approach is a framework to recover the underlying item quality $\mathbf{x}$ that can properly ``explain'' the observed review scores   given the assumptions about the scoring function structures. Note that our recovered qualities may be up to some constant shifting or re-scaling from the true qualities. However, as long as the shifting and re-scaling preserve monotonicity, this will not affect the accuracy of our selection of the best $n$ items.
 % by simply picking the top $n$ papers according to the recovered quality $\mathbf{x} $.

Following the notation in machine learning, let $\H$ denote a \emph{hypothesis class} of scoring functions. For example, for linear scoring functions, the hypothesis class is $\H = \{f: f(x) = ax+b \text{ for some }a\geq 0, b\in \RR \}$. We denote this linear scoring function class as $\H_L$ for convenience. More generally, $\H$ may be any class of functions depending on designer's prior knowledge of these scoring functions.

Given a hypothesis class $\H$, our goal is to find scoring functions $f_1, \cdots, f_M \in \H$ for all reviewers, such that the distance between the true qualities $\mathbf{x}$ and qualities recovered by these selected scoring functions is minimized. In this paper, we use the Euclidean distance. This leads to the following general optimization-based framework, which we term \emph{Least Square Calibration (LSC)}. Note that in general LSC is a \emph{functional optimization problem (FOP)} with: (1) functional variables $\mathbf f = \{ f_j \in \H\}_{j\in[M]}$;
(2)  item quality variables $\mathbf x = \{x(i)\}_{i \in [N]}$; (3)  noise variables $\mathbf \epsilon = \{\epsilon^\ell_j\}_{j\in[M],\ell\in[|I_j|]}$. 
%\begin{itemize}\itemsep0em
%    \item functional variables $\mathbf f = \{ f_j \in \H\}_{j\in[M]}$;
%    \item item quality variables $\mathbf x = \{x(i)\}_{i \in [N]}$;
%    \item  noise variables $\mathbf \epsilon = \{\epsilon^\ell_j\}_{j\in[M],\ell\in[|I_j|]}$.
%\end{itemize}
%\begin{lp}
% \mini{ \sum_{j=1}^M \sum_{l \leq |I_j|} (\epsilon_j^\ell)^2}
% \st
% \qcon{y_j^\ell = f_j(x(I_j^\ell) +\epsilon_j^\ell ) }{j \in [M], l \leq |I_j^\ell|}
%\end{lp}

%\newpage
%\vspace{5pt}
%\hrule
%\vspace{1pt}
\rule{\linewidth}{0.75pt}
\noindent\textbf{Least Square Calibration (\LSC)}
\begin{align}\label{lp:framework}
 \min_{\mathbf{x}, \mathbf{f}, \mathbf{\epsilon}} & \quad \sum_{j=1}^M \sum_{\ell=1}^{|I_j|} (\epsilon_j^\ell)^2 \\ \nonumber
\textup{s.t. } & \quad y_j^\ell = f_j\left(x(I_j^\ell) +\epsilon_j^\ell \right) \tag*{$\forall j \in [M], \ell \leq |I_j|$}
\end{align}
\hrule
\vspace{1pt}
where the set cardinality $|I_j|$ is the number of items reviewer $j$ reviews. LSC tries to find the scoring functions and true qualities that best fit all the review scores in terms of the least square error.

\medskip

\textbf{The rationale behind \LSC} can be seen from several perspectives. First,
% when the noise is i.i.d. Gaussian, the optimal solution of \LSC is the maximum likelihood estimator. Specifically,
fix any (unknown) parameters $\mathbf{x}$ and $\mathbf{f}$, define random variable $Y^\ell_j = f_j\left(x(I_j^\ell) +\epsilon_j^\ell \right)$ with randomness inherited from the noise $\epsilon^\ell_j$. Consider the following parameter estimation problem\footnote{The parameters in this estimation problem includes functional parameters.} --- given observed realizations of $Y^\ell_j $ as $y^\ell_j$,
find parameters $\mathbf x$ and $\mathbf f$ to maximize the likelihood of the observation. It is not difficult to see that the optimal solution to \LSC is the maximum likelihood estimator, when $\epsilon_j^\ell$ is i.i.d. Gaussian noise with mean $0$. Second, it turns out that \LSC can also be viewed as a variant of the classic \emph{matrix seriation} problem.
In Section \ref{sec:related}, we will discuss this connection in more details as well as the connection of \LSC to  linear regression.

\subsection{Linear Scoring Functions with No Noise}
\LSC is a general calibration framework for recovering the true qualities of the items.   One cannot hope to solve it efficiently without any assumption on the scoring function hypothesis class,
since functional optimization problems are generally intractable. Fortunately, we will show   that for a  broad class of commonly used hypothesis classes, \LSC can be formulated as convex programs and therefore be solved efficiently. In some situations, we can even  show that LSC can provably recover the correct item qualities perfectly under mild assumptions. 

As a warm-up, we first restrict our attention to perhaps the mostly widely studied hypothesis class, i.e., linear  functions. We start with the simple situation with no  noise.  In this case,   it is not difficult to show that the following  linear feasibility problem\footnote{Here to guarantee $x(I_j^\ell)$ to be strictly larger than $x(I_j^{\ell-1})$ and we w.l.o.g. use $\geq 1$ for convenience since rescaling the solution will not change ranking.} solves the LSC Problem \eqref{lp:framework}.  We omit the proof of this claim   since a   more general result will be proved in subsection \ref{sec:linear-noise}. 

% \begin{align}\label{lp:linear}
% 	&\min 1 \\
% 	&\textup{s.t} \nonumber\\
% 	&x(I^\ell_j) - x(I_j^{\ell-1}) \geq 1 \quad \forall j \in [M], \forall 2 \leq \ell \leq |I_j| \nonumber \\
% 	&\frac{x(I^\ell_j) - x(I_j^{\ell-1})}{y^\ell_j - y^{\ell-1}_j} = \frac{x(I^{\ell+1}_j) - x(I_j^{l})}{y^{\ell+1}_j - y^{\ell}_j} \quad \forall j \in [M], \forall 2 \leq \ell \leq |I_j| \nonumber
% \end{align}
%\vspace{5pt}
%\hrule
\rule{\linewidth}{0.75pt}
%\vspace{1pt}
\noindent\textbf{\LSC with linear scoring functions and no noise}
\begin{align}\label{lp:linear}
	\min_{\mathbf{x}} & \quad 0 \\ \nonumber
	\textup{s.t. }&\quad x(I_j^\ell) - x(I_j^{\ell-1}) \geq 1 \tag*{$\forall j \in [M], \forall 2 \leq \ell \leq |I_j|$} \\\nonumber
	&\quad \frac{x(I_j^\ell) - x(I_j^{\ell-1})}{y^\ell_j - y_{j}^{\ell-1}} = \frac{x(I_j^{\ell+1}) - x(I_j^{\ell})}{y_{j}^{\ell+1} - y_{j}^{\ell}} \tag*{$\forall j \in [M], 2 \leq \ell \leq |I_j| -1 $} \\  \nonumber
\end{align}
\vspace{-9mm}
\hrule
\vspace{3pt}

With the prior knowledge that the scoring functions are all linear functions and that there is no noise, it is easy to see that the true item qualities $\mathbf{x}^*$ and scoring functions $\mathbf{f}^*$ must be a feasible solution to LP \eqref{lp:linear}. However, we cannot guarantee that LP~\eqref{lp:linear} does not have any other feasible solutions whose induced item order disagrees with $\mathbf{x}^*$.
This could happen, for example, when the reviewers do not review enough items, or their assigned items do not have enough overlap to perform a full calibration.
Therefore, it is an important question to understand under what condition   we can (essentially) recover  all item qualities using LP~\eqref{lp:linear}, in the following sense:

\begin{definition}[Perfect Recovery]
	A set of qualities $\{x(i)\}_{i \in [N]}$ is said to \emph{perfectly recover} the true qualities $\{x^*(i)\}_{i \in [N]}$ if there exists $a > 0$ and $b$ such that $x(i) = ax^*(i)+b$ for every $i \in [N]$.
\end{definition}

We remark that perfect recovery is a stronger condition than   recovering just the \emph{correct order} of the items. It additionally requires that the  quality gaps among different items are proportional to the gaps of true qualities.  One might wonder whether such a strong goal of perfect recovery may be too good to be possible. Surprisingly, our next result formally show that if the reviewer assignments satisfy some reasonable conditions, it is indeed possible to hope for perfect recovery. To introduce our result, we introduce a novel notion of a \textit{review graph} and the property of \emph{recover-resilience}.  
\begin{definition}[Review Graph and  Recovery-Resilience] \label{def:review-graph}
	A \emph{review graph} is an undirected multi-graph\footnote{A multi-graph is a graph that allows multiple edges between two vertices.} $G=(V,E)$ where $V$ is the set of reviewers, and there are $|I_i \intersect I_j|$ edges connecting every two reviewers $i, j \in V$. A review graph $G$ is said to be  \emph{recovery-resilient} if for any item assignments inducing $G$ and any   given review scores  $\{y_j^l \}_{j,l}$, any solution to LP \eqref{lp:linear} is a perfect recovery. 
\end{definition}  
That is, the review graph is a graph that connects reviewers to each other according to the number of common items that they reviewed. It turns out that the recovery-resilience of a review graph is fully characterized by a novel notion of \emph{graph connectivity}, which we coin the   \emph{double connectivity}. 

Double connectivity naturally generalizes the standard notion of graph connectivity.  Recall that a connected component of the reviewer graph is simply a subset of reviewers $S \subseteq [M]$ who connects to each other. We will refer to the items they have collectively reviewed, $C(S) = \bigcup_{i\in S} I_i$, as the items \emph{covered} by component $S$.  
% , which is a sub-graph with min cut at least $1$. 
A connected component of a graph can be identified through a simple \texttt{Repeat-Union} procedure, where we start by treating each node as a component and then repeatedly joining two components whenever they share at least \emph{one} edge. Our notion of  \emph{doubly-connected component} include all components that can be found using the same joining procedure, except that each time we join two components only when they share at least \emph{two} edges. A formal procedure, coined \texttt{Repeat-Union2}, is given in Algorithm \ref{alg:union} below.

\begin{algorithm}[ht]
	\caption{\texttt{Repeat-Union2}}
	\label{alg:union}
	\begin{algorithmic}[1]
		\INPUT Review graph with $N$ item, $M$ reviewer, and item assignment set $\{I_j\}_{j\in [M]}$
		\STATE $U \leftarrow \{\{1\}, \{2\}, \ldots, \{M\}\}$
		\WHILE {$\exists S_1, S_2 \in U$, s.t. $|C(S_1) \cap C(S_2)| \geq 2$ }
		\STATE $U \leftarrow U - S_1 - S_2 + (S_1 \cup S_2)$ (i.e., merge $S_1, S_2$)
		\ENDWHILE
		%		\IF {there exists $S \in U$ such that $C(S) = [N]$}
		%		\STATE return YES
		%		\ELSE
		\STATE \textbf{return} $U$
		%		\ENDIF
		% \STATE return $\algorithmicfalse$
	\end{algorithmic}
\end{algorithm}

\begin{definition}[Doubly-connected component] \label{def:double-connect}
	Any component that can be generated by Algorithm \ref{alg:union} is called a \emph{doubly-connected component}.
\end{definition}

The following theorem shows that to guarantee perfectly recovered qualities from LP \eqref{lp:linear} under linear scoring functions without noise, all we need is a \emph{doubly-connected} component that covers all items.  

\begin{theorem}\label{thm:perfect}
	A review graph $G$ is   recovery-resilient \emph{if and only if }   the review graph $G$ has a doubly-connected component $S$ that covers all items, i.e. $C(S) = [N]$.  
\end{theorem}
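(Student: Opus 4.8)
The plan is to recast ``recovery-resilient'' as a statement about a single linear subspace and then analyse it by induction along \texttt{Repeat-Union2}. Fix the item assignment and a distinct ground truth $x^*$ whose noiseless linear scores are observed. Rewriting the equality (``arithmetic'') constraints of LP \eqref{lp:linear}, an $x$ satisfies them exactly when, for every reviewer $j$, the restriction $x|_{I_j}$ is an affine function of $x^*|_{I_j}$ (affine in $y_j|_{I_j}$, hence in $x^*|_{I_j}$ since each scoring function is affine with positive slope); call this solution set $A(x^*)$, a linear subspace of $\RR^N$ containing $\mathrm{span}\{x^*,\mathbf 1\}$. I claim the LP (equalities together with the $\ge 1$ inequalities) has only perfect recoveries as feasible points, for this $x^*$, iff $A(x^*)=\mathrm{span}\{x^*,\mathbf 1\}$, i.e.\ $\dim A(x^*)=2$: if $\dim A(x^*)\ge 3$, take $\hat x\in A(x^*)\setminus\mathrm{span}\{x^*,\mathbf 1\}$ and perturb $a\,x^*$ (with $a$ huge) by $\epsilon\hat x$ for small $\epsilon$ — the point stays in $A(x^*)$, the inequalities $x(I_j^\ell)-x(I_j^{\ell-1})\ge 1$ are open and slack (the $x^*$-gaps are positive because the score order is the $x^*$-order), so it is feasible but is not of the form $a x^*+b\mathbf 1$; conversely $\dim A(x^*)=2$ forces every feasible $x$ into $\mathrm{span}\{x^*,\mathbf 1\}$ with positive leading coefficient (again from the inequalities). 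Thus the theorem becomes: some doubly-connected component covers $[N]$ $\iff$ $\dim A(x^*)=2$ for every distinct $x^*$.

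For the ``if'' direction I prove a gluing lemma by induction on the steps of \texttt{Repeat-Union2}: for every partition that appears and every $S$ in it with $|C(S)|\ge 2$, each $x\in A(x^*)$ equals a single affine function $\alpha_S x^*+\beta_S$ on $C(S)$. The base case is the rewriting above applied to singletons. For the step, when $S_1,S_2$ are merged we have $|C(S_1)\cap C(S_2)|\ge 2$, hence $|C(S_i)|\ge 2$ and, by the inductive hypothesis, $x=\alpha_i x^*+\beta_i$ on $C(S_i)$; evaluating at two shared items $p\ne q$ — which satisfy $x^*(p)\ne x^*(q)$ since true qualities are distinct — yields $\alpha_1=\alpha_2$ and $\beta_1=\beta_2$, so $x$ is a single affine function of $x^*$ on $C(S_1)\cup C(S_2)=C(S_1\cup S_2)$. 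If a doubly-connected component $S$ has $C(S)=[N]$, then every $x\in A(x^*)$ lies in $\mathrm{span}\{x^*,\mathbf 1\}$, so $\dim A(x^*)=2$ for every $x^*$ and $G$ is recovery-resilient. (Singletons of reviewers with $\le 1$ item never merge and are harmless; $N=1$ is trivial.)

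For the ``only if'' direction I argue the contrapositive: assuming no doubly-connected component covers $[N]$, I construct a distinct $x^*$ with $\dim A(x^*)\ge 3$. Let $S_1,\dots,S_k$ be the final components; by the gluing lemma, after identifying $x$ with the tuple $(\alpha_t,\beta_t)_t$ it induces on the covered items, $A(x^*)$ is (isomorphic to) $\{(\alpha_t,\beta_t)\in\RR^{2k} : \alpha_t x^*(m)+\beta_t=\alpha_{t'}x^*(m)+\beta_{t'}\text{ whenever }m\in C(S_t)\cap C(S_{t'})\}$ times one free coordinate per unreviewed item, and crucially $|C(S_t)\cap C(S_{t'})|\le 1$ for all $t\ne t'$ (otherwise they would have merged). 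After reductions — an unreviewed item already gives a free coordinate, so $\dim\ge 3$; a component covering $\le 1$ item can be deleted or its lone item treated as unreviewed — we may assume $k\ge 2$, every $|C(S_t)|\ge 2$, and every item reviewed. The constraint subspace always contains the $2$-plane of ``globally affine'' solutions $\{\alpha_t\equiv a,\ \beta_t\equiv b\}$; I must choose distinct values $x^*(m)$ on the shared items so that it is strictly larger. Geometrically this means choosing lines $L_t=\{y=\alpha_t x+\beta_t\}$, not all equal and all of positive slope, such that for each shared item $m$ the lines $\{L_t : m\in C(S_t)\}$ are concurrent and these concurrency points have pairwise distinct abscissae (which then become the $x^*(m)$). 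When the ``overlap graph'' on the components is sparse (e.g.\ a forest, or $\le 2k-3$ edges) a dimension count over the cut space of that graph gives $\dim A(x^*)\ge 3$ for \emph{any} $x^*$; the remaining edge-dense case forces $x^*$ into special position, which is possible precisely because the hypergraph of shared items is \emph{linear} (pairwise overlaps at most one component, again from the definition of doubly-connected component), leaving enough freedom for a non-degenerate configuration. Finally rescale $x^*$ so all $\ge 1$ inequalities have slack, take any linear scoring functions, and read off a feasible $x$ that is not a perfect recovery.

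The main obstacle is this last step: showing that for an edge-dense overlap structure one can always pick distinct true qualities for the shared items making the constraint system rank-deficient beyond the trivial $2k-2$ bound — equivalently, realizing the linear incidence hypergraph by not-all-equal, positive-slope lines with all forced concurrences at distinct abscissae. I expect an explicit incremental placement to work (process the components so that each new one is pinned, through each already-placed common item, by at most one already-placed line — linearity prevents over-determination — then choose remaining slopes and abscissae generically so that positivity and distinctness survive), but the bookkeeping is delicate. I would also, in passing, pin down the minor ambiguity of whether ``$G$ has a doubly-connected component covering $[N]$'' is read as a property of $G$ alone or of $G$ together with the underlying item assignment.
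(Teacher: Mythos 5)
Your sufficiency argument is essentially the paper's: reviewer-by-reviewer affinity as the base case, then gluing two components' affine maps by evaluating at two shared items with distinct true qualities. That half is correct. Your reformulation of recovery-resilience as ``$\dim A(x^*)=2$ for every distinct $x^*$,'' with the perturbation $a x^* + \epsilon \hat x$ to turn a third dimension into a feasible non-perfect solution, is also sound, and the ambiguity you flag (whether double connectivity is a property of the multigraph $G$ or of $G$ together with the assignment) is a real one in the paper's definitions.

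The gap is exactly where you say it is, and it is not a bookkeeping issue you can wave at: the necessity direction stands or falls on exhibiting, for an arbitrary residual overlap structure, positive-slope lines $L_t$, not all equal, whose forced concurrency points have pairwise distinct abscissae. You assert this is possible ``because the hypergraph is linear'' and ``expect an incremental placement to work,'' but you never produce the configuration, and the distinct-abscissae requirement is precisely the delicate part (coincidences among intersection abscissae would make two items share a true quality, which the model forbids). The paper closes this hole with an explicit algebraic construction: it first uses the freedom in the definition of recovery-resilience to \emph{choose} the inducing assignment so that every shared item is shared by exactly two collapsed components (one item per edge of the simple graph on components), and then sets $k_i=\sqrt{p_i}$, $b_i=p_i$ with $p_i$ the $i$-th prime and $x^*_e=-\sqrt{p_i}-\sqrt{p_j}$ for edge $e=(i,j)$; irrationality of $\sqrt{p_i p_j p_{i'} p_{j'}}$ guarantees all these abscissae are distinct. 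Adopting that reduction would also spare you the hardest case in your plan, namely items shared by three or more components, where generic lines are \emph{not} concurrent and your ``dimension count over the cut space'' no longer suffices. As written, your proof of the ``only if'' direction is a plan with its central construction missing rather than a proof.
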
 
%\vspace{-2mm}
\begin{proof}[Proof Sketch]
The proof is somewhat involved particularly for the necessity of double-connectivity.  We defer the full proof to Appendix \ref{append:perfect-proof} and only describe the   proof sketch here.  For the ``if'' direction, the key idea is  to use scores of at least two shared items between two reviewers  $j_1$ and $j_2$ to calibrate the other items they reviewed  in $I_{j_1}, I_{j_2}$.  After this calibration, we can effectively treat these two reviewers as a single  ``reviewer'' (formally, a doubly-connected component) 
%henever there are two reviewers $j_1$ and $j_2$ with $|I_{j_1} \cap I_{j_2}| \geq 2$, we can use the scores they give to these common items to calibrate ``locally'' between these two reviewers, such that after the calibration, we can merge them into a single component 
who reviews all the items $I_{j_1} \cup I_{j_2}$. Algorithm~\ref{alg:union} follows exactly this inductive procedure by repeatedly merging two components into a single component, until there is a ``reviewer'' that reviewed the entire item set $[N]$, and these are the calibrated qualities that we want.  

The ``only if'' direction is more challenging.  Here, we have to prove that for   \emph{any}  reviewer graph that does not have a doubly-connected component covering $N$, there must exist an instance where perfect recovery is impossible. The main difficulty lies at crafting such an instance, by specifying paper assignment, true item qualities etc., for any such graph.  Our proof leverages nice properties of prime numbers to carefully construct the instance such that it will provably lead to at least two linearly independent solutions of LP \eqref{lp:linear}.  \end{proof}

%  we could exhaustively merge all doubly-connected components according to Algorithm~\ref{alg:union}, such that the remaining graph must have at most one edge between any two vertices. Suppose the true quality $x^*_e$ of the item that corresponds to the edge $e=(i,j)$ has the form of $-\sqrt{p_i}-\sqrt{p_j}$, where we let $p_i$ be the $i$th smallest prime number. We can show that no two items would share the same quality. In addition, there exists a transform with $k_i=\sqrt{p_i}, b_i=p_i$ such that $\widetilde x_e = k_i x_e+b_i=k_j x_e+b_j$ is also a valid solution to LP~\eqref{lp:linear}. In particular, as we set $k_i\neq k_j$, such transformation on the vector of true qualities is non-linear, and therefore implies the graph is not recovery-resilient.  

\begin{remark} \label{rmk:doubly}
Theorem \ref{thm:perfect}   provides useful and practical insights for reviewer assignments in real applications. Specifically, it appears that  stronger connectivity among reviewers during review assignment, defined formally in the sense of   Definition \ref{def:double-connect} could be more helpful to calibrate their reviews. In practice, such connectivity can be  achieved as follows: (1) run Algorithm \ref{alg:union}; (2) if  the algorithm stops without a doubly connected component covering all items, then simply ask any reviewers in one component to generate two reviews for  two additional items covered by the other component so that these two components now become double-connected. This procedure is actually  common during today's academic conference review  as a way for calibration during post-rebuttal discussions. Interestingly,   Theorem \ref{thm:perfect} serves as a theoretic justification for this practice. In Appendix \ref{append:remark3.5}, we provide experiment results which further   suggest that the topological structure of double connectivity exhibits stronger robustness against perception noise. 
\end{remark}
% \jeff{I remove triple-connectivity here because experiment results seem to show that triple-connectivity does not help with performance. Also add pointers to the Appendix for the experiments. }

\subsection{Linear Scoring Functions with Noise}\label{sec:linear-noise}
The noiseless case discussed in the previous section has an important feature: each reviewer's reviewing scores are monotone in the items' true qualities. This is a very useful property that could help the design of efficient calibration algorithms.
However, it is also a restricted assumption that is often violated in practice. In real-world conference reviews, it is very common to see two reviewers reviewing the same pair of items but rating them in different orders. This is typically explained by reviewers' noisy perception of the paper qualities~\cite{Ge11,Roos11,Baba13,Mackay17,Wang20Debiasing}. %\hf{TODO: please add citations here. -- I commented out the description below, let's just come up with other justifications. } \jeff{just add Mackay17 and Wang20} %There are multiple reasons for this inconsistency. For instance, a item's true quality may depend on many attributes, such as conceptual novelty and technical depth, and different reviewers may have different priorities on these attributions. \jw{This contradict our model assumption. We may want to either repharse this sentence and leave for future work, or experiment this assumption with noise specific to each reviewer.  } 
This leads to our model of perception noise in score generation procedure \eqref{eq:score-generate}.
\rule{\linewidth}{0.75pt}
%\vspace{1pt}
\noindent\textbf{\LSC with linear scoring functions and with noise}
% \begin{figure*}[ht]
%\vspace{-10pt}
\begin{align} \label{lp:linear-noise}
\min_{\mathbf{x}, \mathbf{\epsilon}} & \quad \sum_{j=1}^M \sum_{l \leq |I_j|} (\epsilon^\ell_j)^2 & \\ \nonumber
\textup{s.t. } & %& \sum_{i\in [N] }x(i) = 0 \quad \forall i \in [N] \\\nonumber
\quad \widetilde x^\ell_j- \widetilde x^{\ell-1}_j \geq \frac{ y^\ell_j - y_j^{\ell-1}}{C} 
\tag*{$\forall j \in [M], 2 \leq \ell \leq |I_j|$} \\ \nonumber
& \quad \frac{\widetilde x^\ell_j- \widetilde x^{\ell-1}_j}{y^\ell_j - y^{\ell-1}_j} = \frac{\widetilde x^{\ell+1}_j- \widetilde x^\ell_j}{y^{\ell+1}_j - y^{\ell}_j} 
\tag*{$\forall j \in [M], 2 \leq \ell \leq |I_j|-1$} \\ \nonumber
& \quad \widetilde x^\ell_i = x^\ell_i + \eps^\ell_i 
\tag*{$\forall j \in [M], 1 \leq \ell \leq |I_j|$}
\end{align}
% \end{figure*}
%\vspace{-10pt}
\hrule
\vspace{0pt}

As discussed before, we assume each $\epsilon^\ell_j$ is an independent  noise added to the true quality of the item.    Let $\H_{L}(C)$ denote the set of all   increasing linear scoring functions whose derivatives (whenever exists) are upper bounded by some constant $C$. More formally,  $C = \sup \{\frac{f(x_i) - f(x_j) }{x_i - x_j} | \forall x_i,x_j \}$. (In practice,  $C$ can be empirically chosen to be a large constant if prior knowledge is lacking.)    %\footnote{\xh{this needs more explanation} An upper bound on the derivatives is obviously natural, and is also necessary since otherwise the optimal solutions to \LSC will be extremely close to $0$ by choosing $f_j$ to have extremely large derivatives.} 
%\jw{As the reviewers pointed out, I think it is not accurate here to say derivatives (whenever exists) are upper bounded by some constant $C$, specifically when there is a discrete jump (non-differentiable point). I think we can either assume like $C-$lipschitz continuity, or explicitly define $C = \sup \{\frac{f(x_i) - f(x_j) }{x_i - x_j} | \forall x_i,x_j\in [0,10] \}$ which can empirically be chosen to be a large constant if prior knowledge is lacking. } 
After adding the noise to the linear constraints in LP~\eqref{lp:linear}, our LCS with linear scoring functions becomes the above convex program, whose validity is guaranteed by the following theorem. 
\begin{theorem}\label{thm:linear-noise}
    Convex Program \eqref{lp:linear-noise} is equivalent to \LSC with $\H= \H_{L}(C)$ in the following sense:
    for any optimal solution $(\mathbf{x}^*, \mathbf{\epsilon}^*)$ to Program \eqref{lp:linear-noise},
    there exists $\mathbf f^* = \{f^*_j \in \H_{L}(C)\}_{j \in [M]}$ such that $(\mathbf{x}^*, \mathbf{\epsilon}^*, \mathbf{f}^*)$ is optimal to LSC Program \eqref{lp:framework}.
\end{theorem}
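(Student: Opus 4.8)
## Proof Proposal

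The plan is to establish the claimed equivalence by a two-way argument: first show that any optimal solution to Convex Program \eqref{lp:linear-noise} can be lifted to a feasible solution of \LSC (Program \eqref{lp:framework}) with hypothesis class $\H_L(C)$ of no larger objective, and then show the reverse, that any optimal solution to \LSC projects down to a feasible solution of \eqref{lp:linear-noise} of no larger objective. Since both programs minimize the same quantity $\sum_{j,\ell}(\epsilon_j^\ell)^2$, matching the objective values across the two directions forces the optimal solution of \eqref{lp:linear-noise} to extend to an optimal solution of \LSC, which is exactly the statement.

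For the ``lifting'' direction, fix an optimal $(\mathbf{x}^*,\mathbf{\epsilon}^*)$ to \eqref{lp:linear-noise}. For each reviewer $j$, the constraints of \eqref{lp:linear-noise} guarantee that the perturbed qualities $\widetilde x_j^\ell = x^*(I_j^\ell)+\eps_j^{*\ell}$ are in the same increasing order as the scores $y_j^1<\cdots<y_j^{|I_j|}$, and moreover that the ratios $(\widetilde x_j^\ell - \widetilde x_j^{\ell-1})/(y_j^\ell - y_j^{\ell-1})$ are all equal to a common value, say $1/a_j$. The second family of constraints is precisely the condition that the points $(\widetilde x_j^\ell, y_j^\ell)$ are collinear; the first family of constraints ensures the common slope $a_j$ satisfies $a_j \le C$, and positivity of the score gaps forces $a_j > 0$. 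Hence there is a unique linear function $f_j^*(x) = a_j x + b_j$ in $\H_L(C)$ passing through all these points, i.e.\ $y_j^\ell = f_j^*(\widetilde x_j^\ell) = f_j^*(x^*(I_j^\ell) + \eps_j^{*\ell})$ for all $\ell$. Thus $(\mathbf{x}^*,\mathbf{f}^*,\mathbf{\epsilon}^*)$ is feasible for \LSC with the same objective value.

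For the ``projection'' direction, take any feasible $(\mathbf{x},\mathbf{f},\mathbf{\epsilon})$ for \LSC with every $f_j \in \H_L(C)$, so $f_j(x) = a_jx + b_j$ with $0 \le a_j \le C$. Since $y_j^1 < \cdots < y_j^{|I_j|}$ and $f_j$ is linear and monotone, we actually have $a_j > 0$ (a constant function cannot realize distinct scores), and inverting, $x(I_j^\ell) + \eps_j^\ell = (y_j^\ell - b_j)/a_j$, so that the differences $(x(I_j^\ell)+\eps_j^\ell) - (x(I_j^{\ell-1})+\eps_j^{\ell-1}) = (y_j^\ell - y_j^{\ell-1})/a_j \ge (y_j^\ell - y_j^{\ell-1})/C$ and the collinearity (equal-ratio) constraints hold verbatim. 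Hence setting $\widetilde x_j^\ell := x(I_j^\ell) + \eps_j^\ell$ yields a feasible point of \eqref{lp:linear-noise} with the same objective. Combining the two directions: if $(\mathbf{x}^*,\mathbf{\epsilon}^*)$ is optimal for \eqref{lp:linear-noise} with value $v$, the lifting gives a feasible \LSC solution of value $v$, so $\mathrm{OPT}(\LSC) \le v$; and any optimal \LSC solution of value $\mathrm{OPT}(\LSC)$ projects to a feasible point of \eqref{lp:linear-noise}, so $v \le \mathrm{OPT}(\LSC)$. Therefore $v = \mathrm{OPT}(\LSC)$ and the lifted $\mathbf{f}^*$ constructed in the first paragraph makes $(\mathbf{x}^*,\mathbf{\epsilon}^*,\mathbf{f}^*)$ optimal for \LSC.

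The main obstacle — really the only subtle point — is handling the boundary/degenerate cases cleanly: one must argue that the slopes $a_j$ are genuinely positive (not just nonnegative) so that $f_j$ is invertible, which follows from the assumed strict ordering $y_j^1 < \cdots < y_j^{|I_j|}$ of the scores, and one must check the edge cases $|I_j| \le 2$ where the collinearity constraints are vacuous and the construction of $f_j^*$ from two (or one) points is still well-defined and can be chosen within $\H_L(C)$. Everything else is a routine translation between ``linear function through given points'' and ``equal consecutive slopes plus a slope bound.''
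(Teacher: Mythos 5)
Your proof is correct and follows essentially the same two-directional argument as the paper: lift an optimal solution of the convex program to a feasible \LSC solution by linearly interpolating the points $(\widetilde x_j^\ell, y_j^\ell)$, and project any \LSC-feasible point back down by inverting the linear scoring functions. If anything, your write-up is more careful than the paper's, since you explicitly check the slope bound $0 < a_j \le C$ against the inequality constraint and note the degenerate cases $|I_j| \le 2$, both of which the paper's proof glosses over.
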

Theorem \ref{thm:linear-noise}  shows that we can recover the same item order by solving $\mathbf{x}$ from \LSC or by solving CP \eqref{lp:linear-noise}. 
%We will present a generalized version of this proposition in Theorem \ref{thm:linear-noise} and prove it there.  
The proof of Theorem \ref{thm:linear-noise} relies on a general procedure to argue the equivalence between a functional optimization problem and a variable optimization problem. We defer the formal arguments to Appendix  \ref{append:proof-linear-noise}.

\subsection{Handling Other Classes of Scoring Functions}
It turns out that   the above approach in Section \ref{sec:linear-noise} can be generalized to  handle prior knowledge on various types of function shapes such as linearity, concavity and convexity, reference information (e.g., knowing the true quality of a few items), derivative information such as Lipschitz continuity,  etc. For example, suppose all we know about the scoring functions is that they are monotone increasing, then  Program \eqref{lp:linear-noise} can be adapted to this case simply by removing its second linearity constraint and only using the ordinal constraint   $x^\ell_j- \widetilde x^{\ell-1}_j \geq \frac{ y^\ell_j - y_j^{\ell-1}}{C} $ with a sufficiently large constant $C$.  As another example, if  reviewers are known to have  convex scoring functions,  we can change the linearity constraint to the convexity constraint $\frac{\widetilde x^\ell_j- \widetilde x^{\ell-1}_j}{y^\ell_j - y^{\ell-1}_j} \leq  \frac{\widetilde x^{\ell+1}_j- \widetilde x^\ell_j}{y^{\ell+1}_j - y^{\ell}_j} $.
%the scoring functions are convex, then the second linearity constraint can be adapted to $\frac{\widetilde x^\ell_j- \widetilde x^{\ell-1}_j}{y^\ell_j - y^{\ell-1}_j} \leq  \frac{\widetilde x^{\ell+1}_j- \widetilde x^\ell_j}{y^{\ell+1}_j - y^{\ell}_j}  $. 
These can all lead to similar guarantees as Theorem \ref{thm:linear-noise}.

%Under this assumption, the only information we can learn from reviewers' scores are the relative order of the perceived noisy item qualities. \jw{Reviewer points out that Wang Shah argues that scores may convey more information than simply ranking even under the arbitrary miscalibration? I think we do not contradict their argument, since in this particular setting, we assume monotonicity is our only knowledge? } In this case \LSC can be reduced to the following convex program.
%
%\vspace{3pt}
%\hrule
%\vspace{1pt}
%\noindent\textbf{\LSC with monotone scoring functions}
%\begin{align} \label{lp:ordinal-noise}
%	\min_{\mathbf{x}, \mathbf{\epsilon}} & \quad \sum_{j=1}^M \sum_{\ell=1}^{|I_j|} (\epsilon_j^\ell)^2 \\ \nonumber
%	\textup{s.t. }
%	& \quad \textrm{(denote) } \widetilde x^\ell_j = x(I^\ell_j) + \epsilon^\ell_j \tag*{$\forall j \in [M], j \in [|I_j|]$} \\ \nonumber
%	& \quad \widetilde x^\ell_j - \widetilde x^{\ell-1}_j \geq \frac{y_j^\ell - y_{j}^{\ell-1}}{C} %\\\nonumber
%	\tag*{$\forall j \in [M], 2 \leq \ell \leq |I_j|$}
%\end{align}
%\hrule
%\vspace{0pt}
%
%
%
%\begin{proposition}\label{prop:ordinal-noise}
%	Convex program \eqref{lp:ordinal-noise} is equivalent to \LSC with $\H= \H_{\mono}(C)$ in the following sense:
%	for any optimal solution $(\mathbf{x}^*, \mathbf{\epsilon}^*)$ to \eqref{lp:ordinal-noise},
%	there exists $\mathbf{f}^* = \{f^*_j \in \H_{\mono}(C)\}_{j\in [M]}$ such that $(\mathbf{x}^*, \mathbf{\epsilon}^*, \mathbf{f}^*)$ is optimal to \eqref{lp:framework}.
%\end{proposition}

Additionally, another   advantage of our optimization \LSC framework is that it can handle a \emph{mixed} set of reviewer scoring functions, \emph{if such more refined prior knowledge is available}. Specifically, in many applications, the algorithm designer may have different amount of prior knowledge about different reviewers since some (senior) reviewers have been in the system for a long time whereas some  just entered the system. Consequently, the designer may know, e.g.,  some reviewers' scoring functions are linear,  some  are convex or concave, whereas for some other reviewers, the designer   knows nothing beyond  monotonicity. Our \LSC framework can be easily adapted to handle such mixed prior knowledge since its constraints can be ``personalized'' for each reviewer. Certainly, more specific or detailed prior knowledge  about the hypothesis class will lead to better calibration, as also shown in our experiments. However, we remark that what hypothesis class is more realistic or  justifiable will likely depend on concrete applications on hand, and is beyond the scope of this paper  ---  our focus here is on the generic methodologies. We will demonstrate this kind of modeling strength of the LSC framework in the experiments section.

\section{Connection to Other Problems} \label{sec:related}
Our \LSC framework shares connections to several other problems in statistics and machine learning.
In the hope of providing further intuition and justification for our approach, in this section we elaborate on these connections.

\subsection{Linear Regression}
Linear regression~\cite{Seber12} is a classic approach to modeling the relationship between two or more variables as a linear function. \textit{Ordinary least square (OLS)} is a method to fit such a linear model by minimizing the sum of squared difference between the observed dependent variable and the predicted dependent variable. More specifically, OLS can be expressed by the formulation in Program  \eqref{lp:ols}. 
\begin{figure}[ht]
	\vspace{-5mm}
	\centering
	\begin{minipage}{.33\textwidth}
		\centering
		\begin{align}\label{lp:ols}
			\min_{\mathbf{\alpha, \beta, \epsilon}} & \quad \sum_{j=1}^M  (\epsilon^\ell)^2 \\ \nonumber
			\textup{s.t. } & \quad y^\ell = \alpha \cdot   x^\ell + \beta  + \epsilon^\ell \tag*{$ \forall  l$}
		\end{align}
	   Linear regression formulation
		\vspace{-8pt}
%		\captionof{figure}{Linear regression}
	\end{minipage}%
	\hspace{8mm}
	\begin{minipage}{.6\textwidth}
		\centering
		\begin{align}\label{lp:lfop}
			\min_{\mathbf{x, \alpha, \beta, \epsilon}} & \quad \sum_{j=1}^M \sum_{\ell=1}^{|I_j|} (\epsilon_j^\ell)^2 \\ \nonumber
			\textup{s.t. } & \quad y_j^\ell = \alpha_j \cdot (x(I_j^\ell) +\epsilon^\ell_j) + \beta_j \tag*{$\forall j \in [M], l \leq |I_j|$}
		\end{align}
	LSC formulation
		\vspace{-8pt}
%		\captionof{figure}{LSC formulation}
	\end{minipage}%
\end{figure}
%\begin{align}\label{lp:ols}
%	\min_{\mathbf{\alpha, \beta, \epsilon}} & \quad \sum_{j=1}^M  (\epsilon^\ell)^2 \\ \nonumber
% \textup{s.t. } & \quad y^\ell = \alpha \cdot   x(I^\ell) + \beta  + \epsilon^\ell \tag*{$ l \leq |I_j|$}
%\end{align}
% In the above formulation, we assume there are $M$ functions that we want to simultaneously optimize.

Observe that the formulation of linear regression  is structurally similar to our LSC \eqref{lp:framework} when $\H = \H_L$, which is formulated in Program  \eqref{lp:lfop} above. 
%\begin{align}\label{lp:lfop}
%	\min_{\mathbf{x, \alpha, \beta, \epsilon}} & \quad \sum_{j=1}^M \sum_{\ell=1}^{|I_j|} (\epsilon_j^\ell)^2 \\ \nonumber
% \textup{s.t. } & \quad y_j^\ell = \alpha_j \cdot (x(I_j^\ell) +\epsilon^\ell_j) + \beta_j \tag*{$\forall j \in [M], l \leq |I_j|$}
%\end{align}
There are two major differences between our LSC in  Program \eqref{lp:lfop} and OLS in Program  \eqref{lp:ols}. First, the true item qualities $\mathbf{x}$ in OLS~\eqref{lp:ols}  is known, whereas in our LSC \eqref{lp:lfop}, $\mathbf{x}$ is unknown and treated as variables to optimize. Second, with the additional challenge of not knowing $\mathbf{x}$, the LSC also has an advantage --- we have multiple reviewers (indexed by $j$) with multiple linear functions/models who will generate scores for the same  $\mathbf{x}$. These make LSC relevant to  but quite different from  OLS linear regression. 
%the noise term $\epsilon^\ell_j$ in OLS is added after applying the linear function, whereas in LCS it is added to $x(I^{\ell}_j)$ before applying the function, and is therefore correlated with $\alpha^\ell_j$. As discussed in Section~\ref{sec:score-generate}, our design decision leads to more efficient algorithms.

\subsection{Matrix Seriation Problem}
% The matrix seriation~\cite{Liiv10} problem refers to finding a simultaneous ordering of the rows and the columns of a matrix, and
Our framework is also closely relevant to a well-known difficult combinatorial optimization problem called  \emph{matrix seriation}~\cite{Ha08, Liiv10}. The problem looks to find a consistent ordering of the columns of a matrix. More specifically, given a set of $n$ objects and a (partially observed) data matrix $A$ in which each row of $A$ reflects partial  observations of these objects, the goal of seriation is to find a linear ordering of these $n$ objects that best fit the observed data according to certain loss function.
% \jeff{Do you think the description here is clear enough }

The matrix seriation problem has its roots in many disciplines, with applications such as sparse matrix reordering~\cite{Atkins08}, DNA sequencing~\cite{Benzer62}, and archeological dating~\cite{Brainerd97}. There are many  variants of the matrix seriation problem. Below we   define a popular version with the $\ell_2$ loss function.

\begin{definition}[$\ell_2$-Matrix Seriation] \label{def:seriation}
Given a partially observed  matrix $A \in \RR^{m \times n}$ in which each entry $A_{ij} \in \RR$ or $A_{ij} = *$ (not observed), find matrix $B \in \RR^{m \times n}$ that minimizes 
$\norm{A-B}_2^2 = \sum_{(i,j): A_{ij} \neq *}(A_{ij} - B_{ij})^2 $
subject to $\forall i,j \in [m], q,p\in [n],$
%\begin{align} \label{cond:seriation}
$	B_{i,q} \leq B_{i,p} \iff B_{j,q} \leq B_{j,p}$. That is, 
%\end{align}
in matrix $B$, the ordering of each entry according to the column is consistent across each row.

\end{definition}

% An alternative way to model the score generation is  $y_j^\ell  := f_j (x^* ({I_j^\ell}) ) + \epsilon_j^\ell $ where the noise term is added \emph{after} the score function is applied. Equation \eqref{eq:score-generate} emphasizes that  the randomness during review comes from noisy perception of the true quality, which is then mapped to a score due to possibly different calibration measures of different reviewers. The alternative model above captures the situation where the reviewer is able to apply calibration measure to true item quality, after which noise is added. Though both choices are well-motivated, it appears to us that form \eqref{eq:score-generate}  could be a more suitable capture of  the reality. Moreover, as we will show later, the score generation function of Form  \eqref{eq:score-generate}  leads to more effective algorithms.

Review score calibration   can   be naturally modeled as a matrix seriation problem.  We have a set of items, and a data matrix $A$ in which each row represents a reviewer's rating scores to (only)  his/her assigned items. Our goal is to find an  ordering of the items that is most consistent with the scoring matrix $A$ under the assumption that each review has a \emph{monotone} scoring function.
 % \jeff{Same question. Is my description here clear enough?}
In the following, we show that  the seriation problem is a small variant of our \LSC problem.%, and leave the proof to Appendix \ref{append:maxtrix}
\begin{theorem}\label{thm:maxtrix}
    The $\ell_2$ Matrix Seriation \eqref{def:seriation} problem can be solved by the following Functional Optimization Problem ($\H_{\mono}$ contains all monotone increasing functions): 
	% Assuming all reviewer functions are strictly monotonically increasing, the  can be reduced to
%    \begin{align}\label{lp:seriation-0}
%     \min_{\mathbf{x, f}} & \quad \sum_{j=1}^M \sum_{\ell=1}^{|I_j|} (\epsilon_j^\ell)^2 \\ \nonumber
%     \textup{s.t. } & \quad y_j^\ell = f_j(x(I_j^\ell) ) + \epsilon_j^\ell \tag*{$\forall j \in [M], \ell \leq |I_j|$} \\ \nonumber
%    & \quad f_j \in \H_{\mono} \tag*{$j\in[M]$}
%    \end{align}
    \begin{align}\label{lp:seriation-0}
	\min_{\mathbf{x, f}}   & \quad \sum_{j=1}^M \sum_{\ell=1}^{|I_j|} (\epsilon_j^\ell)^2   \\ \nonumber
%	\textup{s.t. }  &  \quad y_j^\ell = f_j(x(I_j^\ell) ) + \epsilon_j^\ell  \forall j \in [M], \ell \leq |I_j| 
%\quad \textit{ and } \quad    f_j \in \H_{\mono}, \, \, \,   \forall j\in[M]
\textup{s.t. }  &  \quad y_j^\ell = f_j(x(I_j^\ell) ) + \epsilon_j^\ell   
\, \textit{ and } \,    f_j \in \H_{\mono}, \hspace{15mm}  \forall j \in [M], \ell \leq |I_j|
\end{align} 
\end{theorem}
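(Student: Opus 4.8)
The plan is to show that Program \eqref{lp:seriation-0} and the $\ell_2$ matrix seriation instance it encodes have the same optimal value, and that optimal solutions of one are readily converted into optimal solutions of the other. Throughout I identify the $m$ rows of $A$ with the $M$ reviewers (so $m=M$), the $n$ columns with the $N$ items (so $n=N$), the observed entries of row $j$ with the assigned set $I_j$, and the score $y_j^\ell$ with $A_{j,I_j^\ell}$. For the easy direction --- value of \eqref{lp:seriation-0} $\le$ seriation optimum --- I would start from an optimal feasible $B^\star$. Feasibility (Definition \ref{def:seriation}) says that for every pair of columns $q,p$ the inequality ``$B^\star_{j,q}\le B^\star_{j,p}$'' either holds for all rows $j$ or fails for all rows $j$, so it defines a total preorder on the columns; relabel columns by a linear extension of this preorder, so that $B^\star_{j,1}\le\cdots\le B^\star_{j,N}$ for every $j$, and put $x(i)=i$. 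For each $j$ the points $\bigl(x(i),B^\star_{j,i}\bigr)_{i\in[N]}$ are non-decreasing in both coordinates, hence interpolated by some $f_j\in\H_{\mono}$ (e.g.\ piecewise linear, extended by constants outside $[1,N]$). Setting $\epsilon_j^\ell=y_j^\ell-f_j(x(I_j^\ell))=A_{j,I_j^\ell}-B^\star_{j,I_j^\ell}$ then gives a feasible point of \eqref{lp:seriation-0} whose objective equals $\sum_{(j,i):A_{ji}\neq *}(A_{ji}-B^\star_{ji})^2=\norm{A-B^\star}_2^2$.

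For the reverse inequality I would take an optimal $(\mathbf x,\mathbf f)$ for \eqref{lp:seriation-0} and define a full matrix by $B_{j,i}:=f_j(x(i))$. On every observed cell, $A_{j,I_j^\ell}-B_{j,I_j^\ell}=y_j^\ell-f_j(x(I_j^\ell))=\epsilon_j^\ell$, so $\norm{A-B}_2^2$ equals the objective of \eqref{lp:seriation-0}, while the unobserved cells of $B$ are free and do not enter the loss. Since each $f_j$ is monotone, $x(q)\le x(p)$ forces $B_{j,q}\le B_{j,p}$ \emph{simultaneously for all} $j$, so the scalars $x(1),\dots,x(N)$ induce one common column order that every row obeys; modulo the tie subtlety discussed next, this is exactly seriation feasibility, so the seriation optimum is at most the value of \eqref{lp:seriation-0}. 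Combining the two directions gives equality of the optimal values, and the two displayed maps carry optima to optima --- in particular the optimal seriation column ordering is read off directly from $\mathbf x$.

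The step I expect to be the main obstacle is precisely this feasibility check, because the ``$\iff$'' in Definition \ref{def:seriation} is two-sided whereas monotonicity of $f_j$ only supplies the one-sided implication $x(q)\le x(p)\Rightarrow B_{j,q}\le B_{j,p}$. If $x(q)<x(p)$ but $f_j$ is flat on $[x(p),x(q)]$ for some reviewers and strictly increasing there for others, then $B_{j,q}=B_{j,p}$ while $B_{i,q}<B_{i,p}$, violating the biconditional; and since the best monotone fit to a fixed score vector can be locally constant (as in isotonic regression), such a flat may be \emph{forced} at an optimum, so $B=(f_j(x(i)))_{j,i}$ need not land in the seriation feasible set exactly. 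I would resolve this by observing that the feasible region of \eqref{lp:seriation-0} is exactly the topological closure of the seriation feasible region: any $B$ of the form $(f_j(x(i)))_{j,i}$ with non-decreasing $f_j$ is the limit of the strictly column-ordered matrices $(f_j(x(i))+\delta x(i))_{j,i}$ as $\delta\downarrow 0$, and conversely pointwise limits of monotone functions are monotone, while the construction in the first paragraph realizes this infimum. Hence the two programs share the same optimal value, \eqref{lp:seriation-0} attains it (it suffices to range over the finitely many column orders, for each of which the inner optimization is isotonic regression), and an optimal $\mathbf x$ recovers an optimal seriation ordering. Most of the write-up would go into making this closure/attainment argument precise and checking that ties among the $x(i)$, the unobserved entries of $B$, and the distinctness convention $y_j^1<\cdots<y_j^{|I_j|}$ are all mutually consistent.
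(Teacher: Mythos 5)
Your plan is correct, and its second half is essentially the paper's entire proof: identify rows of $A$ with reviewers and columns with items, set $B_{j,i}=f_j(x(i))$ so that $\norm{A-B}_2^2=\sum_{j,\ell}(\epsilon_j^\ell)^2$, and invoke monotonicity of the $f_j$ to get a column order consistent across rows. Where you differ is that the paper proves \emph{only} that direction --- it takes an FOP optimum, builds $B$, and asserts seriation feasibility --- and never establishes the converse, namely that every seriation-feasible $B^\star$ is realizable as $(f_j(x(i)))_{j,i}$ with $f_j\in\H_{\mono}$ and the same objective value. Without that converse one only gets that the FOP optimum is an \emph{achievable} seriation objective, not that it is the \emph{optimal} one, so your first paragraph (relabel columns by the common preorder induced by Definition~\ref{def:seriation}, set $x(i)=i$, interpolate monotonically) supplies a half of the argument that the theorem genuinely needs and the paper omits. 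You are also right that the tie issue is a real obstruction the paper glosses over: monotonicity of $f_j$ only yields the one-sided implication $x(q)\le x(p)\Rightarrow B_{j,q}\le B_{j,p}$, and a flat $f_j$ coexisting with a strictly increasing $f_i$ violates the ``$\iff$'' exactly as written, whereas the paper simply asserts the biconditional. Your closure/perturbation repair (perturb to $f_j(x(i))+\delta x(i)$ and let $\delta\downarrow 0$, then argue attainment by enumerating column orders and doing isotonic regression) is a reasonable fix, with the caveat that it quietly reinterprets the seriation optimum as an infimum over a set that may not be closed; making that precise, as you anticipate, is where the remaining work lies. In short: same skeleton, but your version proves both inequalities and patches a gap that the published proof leaves open.
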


%\begin{proof}
%We can represent reviewers' scores as a matrix $A \in \RR^{m\times n}$ where each row represents the scores given by a specific reviewer, and each column represents the scores received by a specific item. Therefore, $A_{i,j}$ represents reviewer $i$'s score for item $j$. Note that $A_{i,j}$ is a partial matrix. $A_{i,j}$ is empty if reviewer $i$ does not review item $j$.
%Starting with an empty matrix $B$, we fill $B_{j, I_j^\ell}$ with $f_j(x(I^\ell_j)) = y_j^\ell - \epsilon_j^\ell$ for all $j \in [M], \ell \leq |I_j|$, which is obtained from the solution of the Functional Optimization Problem \eqref{lp:seriation}.
%
%Note that
%\begin{align*}
%	\norm{A-B}_2 = \sum_{j=1}^M \sum_{\ell=1}^{|I_j|} (y_j^\ell - f_j(x(I_j^\ell)))^2 = \sum_{j=1}^M \sum_{\ell \leq |I_j|} (\epsilon_j^\ell)^2
%\end{align*}
%which is exactly what FOP~\eqref{lp:seriation} minimizes.
%
%In addition, because all scoring functions are monotonically increasing, it means each row of $B$ preserves the order according to items' qualities $x$. In other words, for any $i, j \in [m], p, q \in [n]$ we have
%\begin{align*}
%    B_{i,q} \leq B_{i,p} \iff B_{j,q} \leq B_{j,p}
%\end{align*}
%% Then, we have $f_i(x_q) \leq f_i(x_p)$ and $f_j(x_q) > f_j(x_p)$. Since both $f_i$ and $f_j$ are monotone functions, we get a contradiction that $x_q \leq x_p$ and $x_q > x_p$. Therefore, the matrix $B$ satisfies condition \eqref{cond:seriation}.
%
%Therefore, \eqref{lp:seriation} solves the matrix seriation problem.
%\end{proof}

It is worthwhile to  compare FOP~\eqref{lp:seriation-0} with our LSC~\eqref{lp:framework} with monotone scoring functions as they have very similar formulations.
The difference is that in FOP~\eqref{lp:seriation-0}, the noise term $\epsilon^\ell_j$ is added after applying the scoring function, whereas in LSC~\eqref{lp:framework} it is added to $x(I^{\ell}_j)$ inside the function. Notably,  this seemingly small difference  turns out to greatly affect the tractability of the problem. Indeed, the matrix seriation problem is known to be computationally \emph{intractable} \cite{chepoi2009seriation}. In contrast, our LSC~\eqref{lp:framework} with monotone scoring functions can be efficiently solved via convex programming  due to the noise inside scoring functions (as assumed by most previous works in this space \cite{Ge11,Roos11,Baba13,Mackay17,Wang20Debiasing}).  % in~\eqref{lp:ordinal-noise}. 

\section{Experiments} \label{sec:exp}
%We now report our experimental studies.
A major challenge when evaluating the performance of calibration algorithms is that their performance measures   rely on the underlying true item qualities. Unfortunately this information is unattainable in most applications --- indeed, if we already know the true qualities, peer reviews would not be needed any more. Besides, due to anonymity of reviewers in all public peer review data that we are able to find, the underlying network of review graph cannot be recovered. Therefore, like many other works in this domain \cite{Mackay17,Wang18, Wang20Debiasing}, we evaluate our algorithms primarily based on synthetic data where we do know the true qualities. To be more realistic, the distribution parameters of our synthetic data are chosen based on ICLR 2019 review scores \cite{iclr19}. Our experiments are designed according to the review procedure of academic papers,  we thus also refer to items as \emph{papers} in this section. In addition, we include an experiment on a real-world dataset~\cite{sajjadi2016peer} in Section \ref{sec:peer-grade}. While our model still consistently demonstrates good performance, we still want to raise a caveat here that strategic manipulation could be a non-negligible factor in the process of peer grading~\cite{zarkoob2020report, stelmakh2021catch}.  Given that such strategic behavior is rarely observed in peer review setting such as  academic conferences, it therefore remains a crucial task to obtain a real-world dataset for future work of this domain.

\subsection{Evaluation on Synthesized Dataset}

%\vspace{-3mm}
\paragraph{Dataset Generation} Each paper's true quality  is drawn from  Gaussian distribution $  \N(5.32, 1.2)$   truncated  to be within $[0,10]$,\footnote{Truncation is for comparison convenience. Untruncated scores will only be easier since boundary papers are easier to distinguish. }  where the mean $5.32$ and standard deviation $1.2$ is estimated   based on the distribution of ICRL2019 review scores. Our random assignment algorithm ensures that most papers will get reviewed by roughly the same number of reviewers. Each reviewer is randomly assigned a scoring function from pre-specified family. Before applying the scoring function, a zero-mean i.i.d. Gaussian noise $\epsilon \sim \N(0, \sigma)$ is added to the true quality of each paper for each reviewer. More details about the scoring function generate as well as the assignment algorithm can be found in Appendix \ref{append:experiment}.

%\vspace{-3mm}
\paragraph{Evaluation Metrics} We adopt two different metrics to quantitively and comprehensively evaluate the performance of our models.
%, especially how good is the selection of papers.
Let $T \subset[N]$ be the ranked list of $n$ papers with highest true qualities and $S\subset[N]$ denote the ranked list of $n$ papers selected by any algorithm.
%and assume $|P| = n$.
%Each model selects a small subset of the papers $S \subset $, with the highest predicted true qualities as the accepted papers.
In all our experiments, we set $|S| = |T| = 0.1 N$, i.e., an acceptance ratio of $10\%$ in our simulated conference setting.

%\vspace{-2mm}
\begin{itemize}[leftmargin=*]
	\item  \textbf{Precision} $	\rho(S, T) =  \frac{1}{|S|}\sum_{i \in S} \one[i \in T]$ measures the ratio of the top papers in $T$ that are selected into set $S$.  The higher precision is, the more papers with truly high qualities are accepted.   
	% It is commonly used to reflect how many selected papers are relevant.   The theoretically best precision is $100\%$, and the larger the value is, the better the model performs.
	%\begin{equation}\label{eq:criteria1}
	%%	\text{accuracy: }  \,
	%	\rho(S, T) =  \frac{1}{|S|}\sum_{i \in S} \one[i \in T]
	%\end{equation}
	%%\jw{should we normalize by $|S|$ or $n$? }
	\item \textbf{Average Gap} $\sigma(S, T) =  \frac{1}{|T|}  \sum_{i \in T} x^*(i) - \frac{1}{|S|} \sum_{i \in S} x^*(i) $ measures the   gap between the average true qualities of the best papers in $T$ and average true qualities of the papers in $S$. It characterize how far the  recovered average paper quality is from the optimal. The smaller this gap is, the better.  
	%  The theoretically best \textit{Average Gap} is $0$, and the smaller the value is, the better the model performs.
	%\begin{equation}\label{eq:criteria2}
	%%	\text{average gap:}  \;
	%	\sigma(S, T) =  \frac{1}{|T|}  \sum_{i \in T} x^*(i) - \frac{1}{|S|} \sum_{i \in S} x^*(i)
	%\end{equation}
	% \jw{should we normalize by $|S|, |T|$ for generality?
\end{itemize}
%\vspace{-2mm}

%\vspace{-3mm}
\paragraph{Baselines}
We consider three competitive baselines. The first is a widely used heuristics which simply uses the averaged reviewers' scores as a prediction of the paper's true quality, and accordingly select the best papers. We will refer this model as the \textbf{Average}. The second baseline is based on a \textit{quadratic program} (\textbf{QP}) proposed by \citet{Roos11}. The third baseline proposed by \citet{Ge11} uses a Bayesian model (\textbf{Bayesian}) to calibrate paper scores. The latter two baselines both assume linear reviewer scoring functions with noise that is similar to our modeling assumption. %However, both of them do not provide theoretical guarantee for perfect recovery of paper scores. To demonstrate the effectiveness and robustness of our framework, we compare our approach with these baselines in a variety of settings.

% Our experiments have two settings: (1) {\bf linear scoring functions}, i.e., all reviewers' scoring functions are linear; (2) {\bf mixed scoring functions} where the scoring functions consist of $33\%$  monotonic increasing function, $33\%$ convex functions, and $33\%$ concave functions (all randomly generated).

%The three metrics altogether provide a comprehensive evaluation of the performance of our models.

% For our proposed model, we implement the three different settings as described in section \ref{sec:model}. Specifically, we refer to the optimization programs that only use the monotone constraints as the LSC (mono), ones that additionally use the linear constraints as the LSC (linear), and ones that use a mix of linear and convexity constraints as the LSC (mixed). In addition, depend on whether the environment is noisy, we accordingly include the noise variables in our program.
%under either the noiseless or noisy environment, we refer the QP~\eqref{lp:ordinal-noise} as LSC (mono), and LP~\eqref{lp:linear} or QP~\eqref{lp:linear-noise} as LSC (linear).
%\vspace{-2mm}

%\subsection{Performance Comparisons}
\paragraph{Experiment Setup} We first test the situation with  linear scoring functions in order to have fair comparisons with previous methods which mostly assume linearity. %We consider 3 hyperparameters in our experiments: The number of papers $k$ that each reviewer reviews, the ratio between papers and reviewers, as well as the standard deviation of noise $\sigma$. We use
Parameters are set as $N = 1000, M=1000, k=5$. We compare two settings: (1) $ \sigma=0$ (\emph{noiseless} case); (2)   $\sigma=0.5$ (\emph{noisy} case).  All reported results are averaged over 20 trials.  All of our models are implemented with the Gurobi Optimizer~\cite{gurobi}\footnote{The source code can be found at \url{https://github.com/lab-sigma/lsc}}. All our algorithms can be solved very efficiently in seconds, we thus will not present running time results.

\begin{table*}[t]
	
	\caption{Performance comparison under \textbf{linear scoring} setting (LSC is our method).  %on the Average, QP~\cite{Roos11}, and LSC with monotone, and linear constraints.
		The left and right side of the table respectively corresponds to   the \textbf{noiseless} ($\sigma=0$) and \textbf{noisy} ($\sigma=0.5$) setting. 	%Each entry contains the mean and standard deviation   over 20 trials.
	}
	\centering
		\begin{tabular}{ l  c c || c c }
			\toprule
			%	& \multicolumn{3}{c|}{Noiseless ($\sigma=0$)} & \multicolumn{3}{c|}{Noisy ($\sigma=0.5$)} \\
			%	\hline
			\diagbox{Model}{Metric} & Pre. (\%) & Avg. Gap  & Pre. (\%) & Avg. Gap  \\
			\midrule
			Average & 40.0 $\pm$ 4.0 & 0.78 $\pm$ 0.08 & 39.2 $\pm$ 4.5 & 0.80 $\pm$ 0.08 \\
			QP   &  97.1 $\pm$ 1.7 & 0.01 $\pm$ 0.01 &  69.2 $\pm$ 4.6 & 0.24 $\pm$ 0.09 \\
			Bayesian &  76.2 $\pm$ 4.0 & 0.12 $\pm$ 0.02 & 71.5 $\pm$ 3.1 & 0.17 $\pm$ 0.03 \\
			LSC (mono) & 91.7 $\pm$ 1.8 & 0.02 $\pm$ 0.01 & 75.9 $\pm$ 2.6 & 0.12 $\pm$ 0.02 \\
			LSC (linear) & \textbf{100 $\pm$ 0} & \textbf{0 $\pm$ 0} & \textbf{80.1 $\pm$ 2.9} & \textbf{0.08 $\pm$ 0.01}  \\
			\bottomrule
			% \hline
			% Average & 39.9 $\pm$ 2.5 & 0.70 $\pm$ 0.06 & 204.7 $\pm$ 5.1 & 38.6 $\pm$ 3.5       & 0.76 $\pm$ 0.07   & 208.8 $\pm$ 6.3\\
			% QP  & 76.4 $\pm$ 3.0 & 0.13 $\pm$ 0.02 & 45.6 $\pm$ 2.1 & 69.8 $\pm$ 4.5   & 0.21 $\pm$ 0.05   & 76.5 $\pm$ 3.0 \\
			% Bayesian   &  54.4 $\pm$ 3.1 & 0.40 $\pm$ 0.03 & 47.4 $\pm$ 7.0 &  51.1 $\pm$ 3.6 & 0.48 $\pm$ 0.04 & \textbf{61.6 $\pm$ 5.8}\\
			% LSC (mono) & 76.4 $\pm$ 3.9 & 0.13 $\pm$ 0.03 & 49.8 $\pm$ 2.3 & 68.8 $\pm$ 3.2	& 0.22  $\pm$ 0.02	& 77.9 $\pm$ 2.8 \\
			% LSC (linear) & 76.4 $\pm$ 3.8 & 0.14 $\pm$ 0.02	 & 47.3 $\pm$ 2.0 & 67.7 $\pm$ 3.1  & 0.22 $\pm$ 0.03	& 80.1 $\pm$ 3.3	\\
			% LSC (mixed) & \textbf{93.2 $\pm$ 2.5} & \textbf{0.02 $\pm$ 0.01} & \textbf{9.4 $\pm$ 0.5} & \textbf{71.1 $\pm$ 3.4} & \textbf{0.18 $\pm$ 0.03} & 80.0 $\pm$ 2.0  \\
			% \hline
		\end{tabular} 
	\label{table1}
\end{table*}

% \vspace{-3mm}
\paragraph{Results}  Table~\ref{table1}  presents the experiment results that compare our methods with the three baselines.  Notably, our model consistently outperform all baselines under both metrics in both settings. In the noiseless setting with linear scoring functions, the experiment results confirm our theoretical analysis, as we indeed observe the perfect recovery by our model under this typical conference setup. In addition, while both LSC (linear), QP and Bayesian specifically models the prior knowledge of linear scoring functions, our model demonstrates the more robust performance regardless of the noisy environment in both metrics. This suggests that our model meets the objectives that we want to accept the best papers and maximize the overall qualities of the selected papers. We also investigate how the model performance changes according to the hardness of the problem instance, such as the effect of changing number of papers per reviewer $k$, the paper to reviewer ratio $N : M$, and the noise scale $\sigma$ in our empirical study and plot the results in figures. In all experiments, LSC still consistently outperforms baselines, and more details  can be found in the Appendix \ref{append:experiment}.

\subsection{Evaluations on the Peer-Grading Dataset} \label{sec:peer-grade}
\begin{table}[h]
\caption{
Performance comparisons in \textbf{Peer-Grading Dataset}.
}
\centering
\begin{tabular}{ l ccccc }
	\toprule
	\diagbox{Metric}{Model} &  Average & QP & Bayesian & LSC (mono) & LSC (linear) \\
	\midrule
	Pre. (\%) & 80.9 & 80.3 & 79.4 & 78.5 & \textbf{82.2}\\ 
	\midrule
	Avg. Gap & 0.48 &  0.40 & 0.47 & 0.55 & \textbf{0.38}\\
	\bottomrule
\end{tabular}	

\label{tab:peer-grading}
\end{table}	

To test our model beyond synthesized dataset, we use a well-known public Peer-Grading dataset from ~\cite{sajjadi2016peer} that collects the peer grading scores of questions in 6 different homework submissions in an Algorithm \& Data Structures class at the University of Hamburg. 
We use the average score graded by the TAs as the ground truth quality $x^*$ of each paper (homework submission). In Table \ref{tab:peer-grading}, we list the precision of the LSC models and other baseline on selecting the top 50\% of the first homework submission. Our model has the best calibration result. Moreover, the LSC model   shows an even more clear edge in ranking-based metrics, meaning the order of the recovered quality matches better with the ground-truth. We defer the details of these results to Appendix \ref{append:peer-grading}.

\subsection{Mixed Set of Scoring Functions and Robustness to Mis-Specified Prior Knowledge} \label{sec:robustness}
%While a number of previous work~\cite{Ge11, Roos11} as well as our framework assume reviewers' scoring functions to be linear, in more complicated real-life scenario this assumption is often violated.
\begin{wrapfigure}[18]{r}{0.4\textwidth}
	\includegraphics[width=0.4\textwidth]{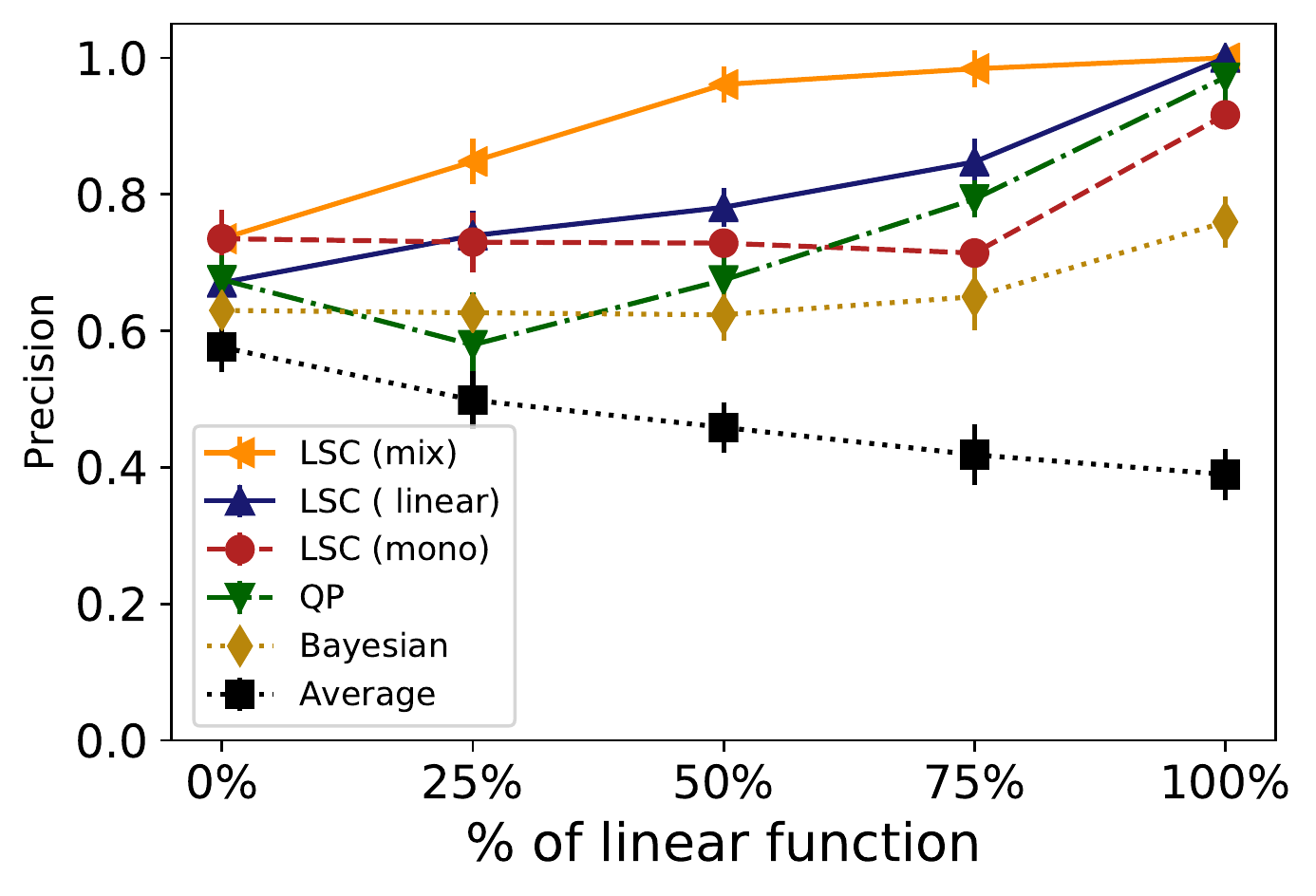}
	\caption{
			Performance comparisons in the noiseless and mixed setups with linear scoring functions and arbitrary monotone functions. Only \LSC (mix) has prior knowledge of  every reviewer's scoring function type. 
		}
	\label{fig:robustness}	
\end{wrapfigure}

To study the robustness of the different methods in scenarios where the scoring function assumption no longer holds or is mis-specified, we consider settings where some percentage of reviewers' scoring functions are instead  \emph{arbitrary random} (non-linear) monotone functions.  Figure~\ref{fig:robustness}  plots the precision curve of our methods and baselines with varying percentage of linear scoring functions. It shows that 1) our LSC (linear) consistently outperforms prior work (QP, Bayesian) that also assumes linear scoring functions. 2) As the percentage of linear function decreases, the gap between LSC (linear) and \LSC (mono) becomes closer, and when all functions are random monotone functions, LSC (linear) is $4\%$ behind LSC (mono) due to its mis-specified prior knowledge in the noiseless setting.  3) \LSC (mix) --- which has accurate prior knowledge about which reviewers are linear --- significantly outperforms \LSC (linear) and all other models, showing that good prior knowledge is indeed quite helpful for calibration. 
%Nevertheless, the influence of whether the prior knowledge is misspecified or not is rather minimal in noisy setting compared to the noiseless setting.
  %Overall, the results shows that model performances are indeed hurdled when the assumption no longer holds. Therefore, the ability to accurately identify function classes is extremely important to good calibration of papers.

 Appendix \ref{append:additional-Exp} contains additional experiments for noisy scores and mixed settings with  monotone, convex  and concave functions. The results further demonstrate the usefulness of prior knowledge for calibration and thus also justified the value of having a flexible calibration methods like \LSC.
 
 %\begin{wrapfigure}[18]{r}{0.5\textwidth}
%\begin{figure}[h]
%	\includegraphics[width=0.48\textwidth]{mix.pdf}
%	\includegraphics[width=0.48\linewidth]{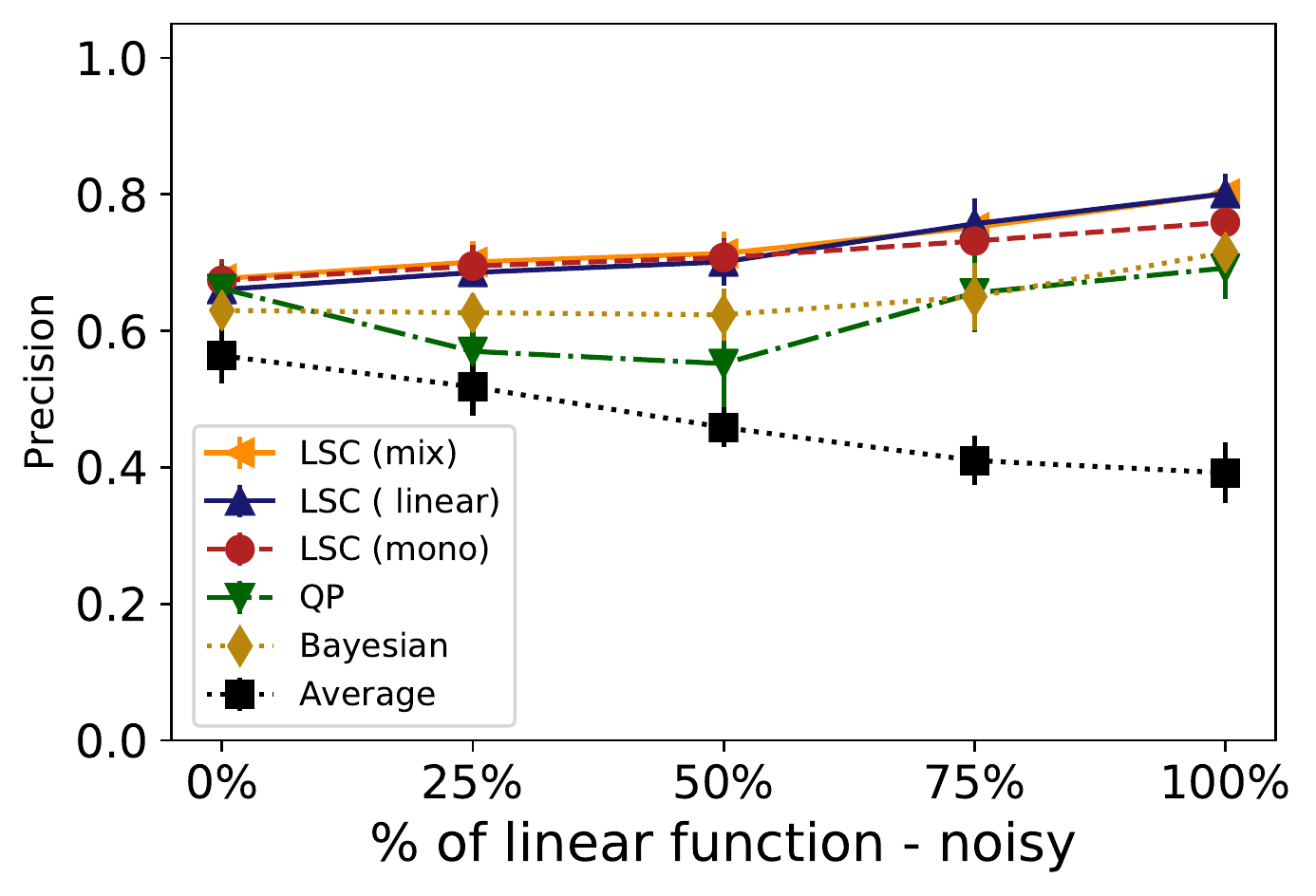}
%	\caption{
%			Performance comparisons in the mixed setups with linear scoring functions and arbitrary monotone  functions with perception noise (left plot: $\sigma = 0$; right plot: $\sigma = 0.5$). Only LSC (mix) has prior knowledge of  every reviewer's scoring function type.
%		}
%	\label{fig:robustness}	
%\end{figure}
%\end{wrapfigure}
%\vspace{-1mm}

%To demonstrate our framework's flexibility to extend to convex/concave function and their mixtures, we additionally consider the setting where the scoring functions are $33\%$ monotonic increasing function, $33\%$ convex functions, and $33\%$ concave functions (all randomly generated). Details of the results in this setting can be found in the Appendix.

 \section{Conclusion}
This paper presents a simple yet powerful framework for calibration in peer review systems, which exploits both the robustness of linear regression methods and the topological structure of review graphs. Moreover, our empirical and theoretical results provide a general guideline on the assignment rules in peer review for more effective calibration.
% Our model is able to provide robust calibrations when the reviewers' scoring functions are from a wide range of hypothesis classes. When reviewers have linear scoring functions, our framework provably perform perfect calibration when the reviewer graph satisfies a mild double-connectivity condition. For other hypothesis classes and possibly with noises, we demonstrate experimentally that our framework can still deliver strong performances and consistently beat other baseline methods.
 In follow-up work, we hope to build upon our flexible framework for the extension to a wider spectrum of scoring function hypothesis classes, as well as a broader family of peer review systems beyond the classical format of academic conference.
% Other than academic conference peer reviews
% How to find other applications and identify suitable scoring function hypothesis classes is one interesting future working direction.
% Other future directions may include exploring other peer review formats and extending this framework to accommodate the new formats.
 For example, AAAI-2021 recently started to adopt a review process\footnote{\url{https://aaai.org/Conferences/AAAI-21/reviewing-process/}}, where in the first phase senior reviewers review all the papers and make summary rejections. The remaining papers will go to the second phase and receive additional reviews for the final decisions. It remains interesting open problems to develop novel calibration methods for such two-phase peer review process, or alternatively, to design the peer review mechanism to work better with calibration methods.

 \section{Acknowledgements}
 We thank all the anonymous reviewers for their helpful comments, especially the discussions on real-world datasets. This research is supported by a Google Faculty Research Award and the Ministry of Education, Singapore, under its Academic Research Fund Tier 2 (MOE2019-T2-1-045).

 \bibliography{main}

\begin{thebibliography}{29}
\providecommand{\natexlab}[1]{#1}
\providecommand{\url}[1]{\texttt{#1}}
\expandafter\ifx\csname urlstyle\endcsname\relax
  \providecommand{\doi}[1]{doi: #1}\else
  \providecommand{\doi}{doi: \begingroup \urlstyle{rm}\Url}\fi

\bibitem[icl(2019)]{iclr19}
\emph{7th International Conference on Learning Representations, {ICLR} 2019,
  New Orleans, LA, USA, May 6-9, 2019}, 2019. OpenReview.net.
\newblock URL \url{https://openreview.net/group?id=ICLR.cc/2019/Conference}.

\bibitem[AAA(2020)]{AAAI2020}
\emph{The Thirty-Fourth {AAAI} Conference on Artificial Intelligence, {AAAI}
  2020, The Thirty-Second Innovative Applications of Artificial Intelligence
  Conference, {IAAI} 2020, The Tenth {AAAI} Symposium on Educational Advances
  in Artificial Intelligence, {EAAI} 2020, New York, NY, USA, February 7-12,
  2020}, 2020. {AAAI} Press.
\newblock ISBN 978-1-57735-823-7.
\newblock URL \url{https://www.aaai.org/Library/AAAI/aaai20contents.php}.

\bibitem[Ammar and Shah(2012)]{Ammar12}
Ammar Ammar and Devavrat Shah.
\newblock Efficient rank aggregation using partial data.
\newblock \emph{ACM SIGMETRICS Performance Evaluation Review}, 40\penalty0
  (1):\penalty0 355--366, 2012.

\bibitem[Atkins et~al.(1998)Atkins, Boman, and Hendrickson]{Atkins08}
Jonathan~E Atkins, Erik~G Boman, and Bruce Hendrickson.
\newblock A spectral algorithm for seriation and the consecutive ones problem.
\newblock \emph{SIAM Journal on Computing}, 28\penalty0 (1):\penalty0 297--310,
  1998.

\bibitem[Baba and Kashima(2013)]{Baba13}
Yukino Baba and Hisashi Kashima.
\newblock Statistical quality estimation for general crowdsourcing tasks.
\newblock In \emph{Proceedings of the 19th ACM SIGKDD international conference
  on Knowledge discovery and data mining}, pages 554--562, 2013.

\bibitem[Benzer(1962)]{Benzer62}
Seymour Benzer.
\newblock The fine structure of the gene.
\newblock \emph{Scientific American}, 206\penalty0 (1):\penalty0 70--87, 1962.

\bibitem[Brainerd(1997)]{Brainerd97}
George~W Brainerd.
\newblock The place of chronological ordering in archaeological analysis.
\newblock In \emph{Americanist Culture History}, pages 301--313. Springer,
  1997.

\bibitem[Chepoi et~al.(2009)Chepoi, Fichet, and Seston]{chepoi2009seriation}
Victor Chepoi, Bernard Fichet, and Morgan Seston.
\newblock Seriation in the presence of errors: Np-hardness of l-infinity
  fitting robinson structures to dissimilarity matrices.
\newblock \emph{Journal of classification}, 26\penalty0 (3):\penalty0 279--296,
  2009.

\bibitem[Freund et~al.(2003)Freund, Iyer, Schapire, and Singer]{Freund03}
Yoav Freund, Raj Iyer, Robert~E Schapire, and Yoram Singer.
\newblock An efficient boosting algorithm for combining preferences.
\newblock \emph{Journal of machine learning research}, 4\penalty0
  (Nov):\penalty0 933--969, 2003.

\bibitem[Ge et~al.()Ge, Welling, and Ghahramani]{Ge11}
Hong Ge, Max Welling, and Zoubin Ghahramani.
\newblock A bayesian model for calibrating reviewer scores.

\bibitem[Gurobi~Optimization(2021)]{gurobi}
LLC Gurobi~Optimization.
\newblock Gurobi optimizer reference manual, 2021.
\newblock URL \url{http://www.gurobi.com}.

\bibitem[Hahsler et~al.(2008)Hahsler, Hornik, and Buchta]{Ha08}
Michael Hahsler, Kurt Hornik, and Christian Buchta.
\newblock Getting things in order: an introduction to the r package seriation.
\newblock \emph{Journal of Statistical Software}, 25\penalty0 (3):\penalty0
  1--34, 2008.

\bibitem[Harzing et~al.(2009)Harzing, Baldueza, Barner-Rasmussen, Barzantny,
  Canabal, Davila, Espejo, Ferreira, Giroud, Koester, et~al.]{Harzing09}
Anne-Wil Harzing, Joyce Baldueza, Wilhelm Barner-Rasmussen, Cordula Barzantny,
  Anne Canabal, Anabella Davila, Alvaro Espejo, Rita Ferreira, Axele Giroud,
  Kathrin Koester, et~al.
\newblock Rating versus ranking: What is the best way to reduce response and
  language bias in cross-national research?
\newblock \emph{International Business Review}, 18\penalty0 (4):\penalty0
  417--432, 2009.

\bibitem[Kiayias et~al.(2020)Kiayias, Kohlweiss, Wallden, and
  Zikas]{Kiayias2020}
Aggelos Kiayias, Markulf Kohlweiss, Petros Wallden, and Vassilis Zikas.
\newblock \emph{Public-Key Cryptography-PKC 2020}.
\newblock Springer, 2020.

\bibitem[Liiv(2010)]{Liiv10}
Innar Liiv.
\newblock Seriation and matrix reordering methods: An historical overview.
\newblock \emph{Statistical Analysis and Data Mining: The ASA Data Science
  Journal}, 3\penalty0 (2):\penalty0 70--91, 2010.

\bibitem[MacKay et~al.(2017)MacKay, Kenna, Low, and Parker]{Mackay17}
Robert~S MacKay, Ralph Kenna, Robert~J Low, and Sarah Parker.
\newblock Calibration with confidence: a principled method for panel
  assessment.
\newblock \emph{Royal Society open science}, 4\penalty0 (2):\penalty0 160760,
  2017.

\bibitem[Mitliagkas et~al.(2011)Mitliagkas, Gopalan, Caramanis, and
  Vishwanath]{Mitliagkas11}
Ioannis Mitliagkas, Aditya Gopalan, Constantine Caramanis, and Sriram
  Vishwanath.
\newblock User rankings from comparisons: Learning permutations in high
  dimensions.
\newblock In \emph{2011 49th Annual Allerton Conference on Communication,
  Control, and Computing (Allerton)}, pages 1143--1150. IEEE, 2011.

\bibitem[Negahban et~al.(2012)Negahban, Oh, and Shah]{Negahban12}
Sahand Negahban, Sewoong Oh, and Devavrat Shah.
\newblock Iterative ranking from pair-wise comparisons.
\newblock \emph{Advances in neural information processing systems},
  25:\penalty0 2474--2482, 2012.

\bibitem[Paul(1981)]{Paul81}
SR~Paul.
\newblock Bayesian methods for calibration of examiners.
\newblock \emph{British Journal of Mathematical and Statistical Psychology},
  34\penalty0 (2):\penalty0 213--223, 1981.

\bibitem[Rokeach(1968)]{Rokeach68}
Milton Rokeach.
\newblock The role of values in public opinion research.
\newblock \emph{Public Opinion Quarterly}, 32\penalty0 (4):\penalty0 547--559,
  1968.

\bibitem[Roos et~al.(2011)Roos, Rothe, and Scheuermann]{Roos11}
Magnus Roos, J{\"o}rg Rothe, and Bj{\"o}rn Scheuermann.
\newblock How to calibrate the scores of biased reviewers by quadratic
  programming.
\newblock In \emph{Proceedings of the AAAI Conference on Artificial
  Intelligence}, volume~25, 2011.

\bibitem[Sajjadi et~al.(2016)Sajjadi, Alamgir, and von
  Luxburg]{sajjadi2016peer}
Mehdi~SM Sajjadi, Morteza Alamgir, and Ulrike von Luxburg.
\newblock Peer grading in a course on algorithms and data structures: Machine
  learning algorithms do not improve over simple baselines.
\newblock In \emph{Proceedings of the third (2016) ACM conference on Learning@
  Scale}, pages 369--378, 2016.

\bibitem[Seber and Lee(2012)]{Seber12}
George~AF Seber and Alan~J Lee.
\newblock \emph{Linear regression analysis}, volume 329.
\newblock John Wiley \& Sons, 2012.

\bibitem[Stelmakh et~al.(2021)Stelmakh, Shah, and Singh]{stelmakh2021catch}
Ivan Stelmakh, Nihar~B Shah, and Aarti Singh.
\newblock Catch me if i can: Detecting strategic behaviour in peer assessment.
\newblock In \emph{Proceedings of the AAAI Conference on Artificial
  Intelligence}, volume~35, pages 4794--4802, 2021.

\bibitem[Wallach et~al.(2019)Wallach, Larochelle, Beygelzimer,
  d'Alch{\'{e}}{-}Buc, Fox, and Garnett]{nips2019}
Hanna~M. Wallach, Hugo Larochelle, Alina Beygelzimer, Florence
  d'Alch{\'{e}}{-}Buc, Emily~B. Fox, and Roman Garnett, editors.
\newblock \emph{Advances in Neural Information Processing Systems 32: Annual
  Conference on Neural Information Processing Systems 2019, NeurIPS 2019,
  December 8-14, 2019, Vancouver, BC, Canada}, 2019.
\newblock URL \url{https://proceedings.neurips.cc/paper/2019}.

\bibitem[Wang and Shah(2019)]{Wang18}
Jingyan Wang and Nihar~B Shah.
\newblock Your 2 is my 1, your 3 is my 9: Handling arbitrary miscalibrations in
  ratings.
\newblock In \emph{Proceedings of the 18th International Conference on
  Autonomous Agents and MultiAgent Systems}, pages 864--872, 2019.

\bibitem[Wang et~al.(2020)Wang, Stelmakh, Wei, and Shah]{Wang20Debiasing}
Jingyan Wang, Ivan Stelmakh, Yuting Wei, and Nihar~B Shah.
\newblock Debiasing evaluations that are biased by evaluations.
\newblock \emph{arXiv preprint arXiv:2012.00714}, 2020.

\bibitem[Wang et~al.(2013)Wang, Wang, Li, He, and Liu]{wang2013theoretical}
Yining Wang, Liwei Wang, Yuanzhi Li, Di~He, and Tie-Yan Liu.
\newblock A theoretical analysis of ndcg type ranking measures.
\newblock In \emph{Conference on Learning Theory}, pages 25--54. PMLR, 2013.

\bibitem[Zarkoob et~al.(2020)Zarkoob, Fu, and Leyton-Brown]{zarkoob2020report}
Hedayat Zarkoob, Hu~Fu, and Kevin Leyton-Brown.
\newblock Report-sensitive spot-checking in peer-grading systems.
\newblock In \emph{Proceedings of the 19th International Conference on
  Autonomous Agents and MultiAgent Systems}, pages 1593--1601, 2020.

\end{thebibliography}
 \bibliographystyle{plainnat}

\newpage
\section*{Checklist}

%%% BEGIN INSTRUCTIONS %%%
%The checklist follows the references.  Please
%read the checklist guidelines carefully for information on how to answer these
%questions.  For each question, change the default \answerTODO{} to \answerYes{},
%\answerNo{}, or \answerNA{}.  You are strongly encouraged to include a {\bf
%justification to your answer}, either by referencing the appropriate section of
%your paper or providing a brief inline description.  For example:
%\begin{itemize}
%  \item Did you include the license to the code and datasets? \answerYes{See Section~\ref{gen_inst}.}
%  \item Did you include the license to the code and datasets? \answerNo{The code and the data are proprietary.}
%  \item Did you include the license to the code and datasets? \answerNA{}
%\end{itemize}
%Please do not modify the questions and only use the provided macros for your
%answers.  Note that the Checklist section does not count towards the page
%limit.  In your paper, please delete this instructions block and only keep the
%Checklist section heading above along with the questions/answers below.
%%% END INSTRUCTIONS %%%

\begin{enumerate}

\item For all authors...
\begin{enumerate}
  \item Do the main claims made in the abstract and introduction accurately reflect the paper's contributions and scope?
    \answerYes{} See Section \ref{sec:intro}
  \item Did you describe the limitations of your work? 
    \answerYes{} See Section \ref{sec:intro}
  \item Did you discuss any potential negative societal impacts of your work?
    \answerNo{} We are not aware of any negative social impact of our work.
  \item Have you read the ethics review guidelines and ensured that your paper conforms to them?
    \answerYes{}
\end{enumerate}

\item If you are including theoretical results...
\begin{enumerate}
  \item Did you state the full set of assumptions of all theoretical results?
    \answerYes{} See Section \ref{sec:model}
	\item Did you include complete proofs of all theoretical results?
    \answerYes{} See Appendix \ref{append:proof}
\end{enumerate}

\item If you ran experiments...
\begin{enumerate}
  \item Did you include the code, data, and instructions needed to reproduce the main experimental results (either in the supplemental material or as a URL)?
    \answerYes{} See supplemental material
  \item Did you specify all the training details (e.g., data splits, hyperparameters, how they were chosen)?
    \answerYes{} See Section \ref{sec:exp} and Appendix \ref{append:experiment}
	\item Did you report error bars (e.g., with respect to the random seed after running experiments multiple times)? 
    \answerYes{} See Section \ref{sec:exp} and Appendix \ref{append:experiment}
	\item Did you include the total amount of compute and the type of resources used (e.g., type of GPUs, internal cluster, or cloud provider)? 
    \answerNo{} Our algorithm can be executed on a laptop CPU.
\end{enumerate}

\item If you are using existing assets (e.g., code, data, models) or curating/releasing new assets...
\begin{enumerate}
  \item If your work uses existing assets, did you cite the creators?
    \answerNA{}
  \item Did you mention the license of the assets?
    \answerNA{}
  \item Did you include any new assets either in the supplemental material or as a URL?
    \answerNA{}
  \item Did you discuss whether and how consent was obtained from people whose data you're using/curating?
    \answerNA{}
  \item Did you discuss whether the data you are using/curating contains personally identifiable information or offensive content?
    \answerNA{}
\end{enumerate}

\item If you used crowdsourcing or conducted research with human subjects...
\begin{enumerate}
  \item Did you include the full text of instructions given to participants and screenshots, if applicable?
    \answerNA{}
  \item Did you describe any potential participant risks, with links to Institutional Review Board (IRB) approvals, if applicable?
    \answerNA{}
  \item Did you include the estimated hourly wage paid to participants and the total amount spent on participant compensation?
    \answerNA{}
\end{enumerate}

\end{enumerate}

\newpage

\newpage
\appendix

\title{Supplemental Material for \\ Least Square Calibration for Peer Reviews}

\section{Missing Proofs}\label{append:proof}

\subsection{Proof of Theorem \ref{thm:perfect}}\label{append:perfect-proof}
%\begin{theorem*}[Restatement]
%%	If there exists some review scores $\{y_j^l \}_{j,l}$ such that any solution to LP \eqref{lp:linear} is a perfect recovery given these scores, then the review graph must have a connected component $S$ that covers all items, i.e. $C(S) = [N]$.
%%
%%	On the other hand, if the the review graph has a doubly-connected component $S$ that covers all items, i.e. $C(S) = [N]$, then any solution to LP \eqref{lp:linear} is a perfect recovery given these scores for any given review scores   $\{y_j^l \}_{j,l}$.
%
%The review graph has a doubly-connected component $S$ that covers all items, i.e. $C(S) = [N]$, \emph{if and only if } any solution to LP \eqref{lp:linear} is a perfect recovery given these scores for any given review scores   $\{y_j^l \}_{j,l}$.
%\end{theorem*}
\begin{theorem*}[Restatement of Theorem \ref{thm:perfect} ] 
A review graph $G$ is   recovery-resilient \emph{if and only if }   the review graph $G$ has a doubly-connected component $S$ that covers all items, i.e. $C(S) = [N]$.  	
\end{theorem*}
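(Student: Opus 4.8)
The plan is to reformulate LP~\eqref{lp:linear} as a ``rigidity'' statement. Since there is no noise, each score $y_j^\ell$ is an affine function of the true quality $x^*(I_j^\ell)$, so the equality constraints of LP~\eqref{lp:linear} are equivalent to: for every reviewer $j$, the restriction $x|_{I_j}$ equals $a_j\,x^*|_{I_j}+c_j$ for some $a_j,c_j$, with $a_j>0$ forced by the unit-gap constraint. Hence a feasible $x$ is precisely an assignment of an increasing affine map $t\mapsto a_j t+c_j$ to each reviewer such that the maps of any two reviewers sharing an item agree on that item, and ``recovery-resilience'' is exactly the statement that, for every assignment inducing $G$ and every score vector, all these maps must coincide on each connected part of the structure.

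For the ``if'' direction I would induct on the execution of \texttt{Repeat-Union2}, maintaining the invariant that once a component $S$ has been formed, every feasible $x$ restricted to $C(S)$ is a \emph{single} affine image of $x^*|_{C(S)}$. The base case ($|C(S)|\le 2$, or a single reviewer) is immediate. For a merge step, the while-loop condition guarantees the merged components $S_1,S_2$ satisfy $|C(S_1)\cap C(S_2)|\ge 2$; writing $x|_{C(S_1)}=a_1 x^*+b_1$ and $x|_{C(S_2)}=a_2 x^*+b_2$ from the hypothesis and evaluating both at the two shared items---whose true qualities are distinct---forces $a_1=a_2$ and $b_1=b_2$. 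Iterating until the doubly-connected component $S$ with $C(S)=[N]$ is built gives $x=ax^*+b$ on all of $[N]$ with $a>0$, i.e.\ perfect recovery.

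For the ``only if'' direction I would build a counterexample. Let $U=\{S_1,\dots,S_k\}$ be the output of \texttt{Repeat-Union2}; by assumption $k\ge 2$ and no $S_c$ covers $[N]$, and termination of the while-loop yields the key structural fact that $|C(S_c)\cap C(S_{c'})|\le 1$ for all $c\neq c'$. Fix any item assignment inducing $G$ (one exists since $G$ is a review graph) and let all scoring functions be the identity. Consider candidates of the form $x|_{C(S_c)}=\alpha_c x^*+\beta_c$: such an $x$ automatically satisfies all equality constraints of LP~\eqref{lp:linear}, and after adding a large multiple of $x^*$ (to make every $\alpha_c>0$) and rescaling by a large positive constant (to meet the unit-gap constraints) it becomes feasible, precisely when $\alpha_c x^*(m)+\beta_c=\alpha_{c'}x^*(m)+\beta_{c'}$ for every item $m$ shared by $S_c$ and $S_{c'}$. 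This is a linear system in the $2k$ unknowns $(\alpha_c,\beta_c)$ that always contains the $2$-dimensional ``uniform'' family $\alpha_c\equiv\alpha,\ \beta_c\equiv\beta$; the reduction then comes down to choosing the (distinct) true qualities $x^*$ so that the system admits a \emph{further} solution in which the $\alpha_c$ are not all equal, since such a solution yields an LP-feasible $x$ that is not an affine image of $x^*$, so $G$ is not recovery-resilient.

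The hard part will be exactly this last step. For general $G$ the number of agreement equations can be as large as (or larger than) $2k$---for instance when many components pairwise overlap in distinct items, which already occurs for $k\ge 4$---so the system is not automatically under-determined, and one genuinely has to choose the item qualities to force a rank drop. I expect this to require a number-theoretic construction: taking the qualities of the shared items to be suitably chosen products (or powers) of distinct primes so that, by unique factorization, the relevant determinants of the agreement system vanish while all qualities remain distinct and all induced within-reviewer orders stay consistent. Carrying this out uniformly over every admissible $G$, and checking that the extra solution can be transported into the feasible region of LP~\eqref{lp:linear} (thereby producing two linearly independent solutions not related by any affine transformation), is where the real work lies.
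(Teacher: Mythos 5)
Your ``if'' direction is essentially the paper's proof: induct along the execution of \texttt{Repeat-Union2} with the invariant that each formed component carries a single affine image of $x^*$, and use two shared items with distinct true qualities to force the two affine maps to coincide at a merge. That part is complete and correct. The problem is the ``only if'' direction, where you explicitly stop at the decisive step (``this is where the real work lies''): you reduce to showing that the true qualities can be chosen so that the agreement system in the $(\alpha_c,\beta_c)$ admits a non-uniform solution, observe that this system may have more equations than unknowns, and then only conjecture that some unique-factorization/determinant argument will force a rank drop. As written, the necessity half is a sketch with an acknowledged hole, not a proof, and your proposed mechanism (products or powers of primes making determinants vanish) is not the one that actually works.

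The missing idea is to reverse the order of quantification and to exploit the freedom in choosing the item assignment. Since recovery-resilience must hold for \emph{every} assignment inducing $G$ and every score vector, you may first collapse each doubly-connected component to a single vertex (between any two of which there is at most one edge, by termination of the while-loop) and then realize each remaining edge $e=(i,j)$ as a \emph{distinct} item reviewed by exactly those two components. Now fix the affine maps first --- the paper takes $k_i=\sqrt{p_i}$, $b_i=p_i$ with $p_i$ the $i$th prime --- and let each agreement equation $k_i x^*_e+b_i=k_j x^*_e+b_j$ \emph{define} the true quality of its item: $x^*_e=-(\sqrt{p_i}+\sqrt{p_j})$. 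Because each equation now contains its own free unknown $x^*_e$, the over-determination you worry about never arises; the only thing left to check is that these prescribed qualities are pairwise distinct, which follows from the irrationality argument showing $\sqrt{p_i}+\sqrt{p_j}\neq\sqrt{p_{i'}}+\sqrt{p_{j'}}$ whenever the prime multisets differ. Since the resulting alternative solution has distinct slopes $k_i$ across components, it is not an affine image of $x^*$, and $G$ is not recovery-resilient. Without this (or an equivalent) explicit construction, carried out uniformly over all admissible $G$, your necessity argument does not go through.
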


\begin{proof} 
\textbf{Proof of Sufficiency. } 
We first consider the ``if'' direction.  That is, the qualities of all items covered by a doubly-connected component can always be perfectly recovered by any solution of LP \eqref{lp:linear}. Without loss of generality, in the following argument, we fix any given review scores, any solution of LP. Following the design of Algorithm \ref{alg:union}, we present a bottom-up induction proof. 

\textbf{Base case:} the smallest unit of a doubly-connected component is a single vertex, which represents some reviewer $i$. According to the linear constraint in LP \eqref{lp:linear}, the recovered qualities $x$ of all items in $I_i$ must be a linear transformation of their review scores $y^i$ by reviewer $i$. As these review scores $y^i$ are also a linear transformation of the true qualities $x^*$, so the recovered qualities $x$ solved by the LP \eqref{lp:linear} must be a linear transformation of the true qualities $x^*$. In another word, the solution of the LP \eqref{lp:linear} must perfectly recover the qualities of all items in $I_i$.

\textbf{Inductive case:} Given two doubly-connected components $A, B$, and the solution of LP \eqref{lp:linear} restricted to $A,B$  is a perfect recovery respectively to the items covered by $A$ and $B$.  This means, by definition, we have $k_A,k_B >0$  and $b_A, b_2$ such that $x_A = k_A x^*_A+b_A, x_B = k_B x^*_B+b_B$,  where $x_A, x_A^*$ and $x_B, x_B^* $ are the recovered \emph{vector} of qualities and true qualities indexed by  items in $A, B$. 
	According to Algorithm \ref{alg:union}, if $A$ and $B$ share at least $2$ commonly covered items, then their union $D=A\union B$ is also a doubly-connected component. We show that the solution of \eqref{lp:linear} must perfectly recover the qualities of items covered by $D$.

Let $u, v$ be two of the items covered by both component $A$ and $B$. Denote their recovered (by LP \eqref{lp:linear} restricted to $D$) and true qualities be $x_u, x_u^*$ and $x_v, x_v^*$ respectively. Let $x_A, x_B$ denote the recovered $x$ when restricted to $A,B$ respectively. By definition, $x_A$ [resp. $x_B$] is a feasible solution to LP \eqref{lp:linear} restricted to $A$ [resp. $B$]. Our induction hypothesis thus implies that there are the two linear functions $x_A = k_A x^*_A+b_A, x_B = k_B x^*_B+b_B$. Note that $x_u, x_v$ should appear in both vector $x_A$ and $x_B $.  That means these two linear functions $x_A = k_A x^*_A+b_A, x_B = k_B x^*_B+b_B$ intersect at both $(x_u, x_u^*)$ and $(x_v, x_v^*)$. Since item true qualities $x^*$ are assume to be unequal for any two   different items, the only possibility when the two linear functions   have two intersections is that  they are identical, i.e., $k_A=k_B, b_A=b_B$. This implies the entire $x_D$  vector where $D = A\cup B$ must satisfy $x_D = k_A \cdot x^*_D  + b_A$. That is, recovered qualities in both $x_A, x_B$ follow the same linear transformation from the true qualities. Therefore, all items covered by $D$ can be perfectly recovered by the solution of LP \eqref{lp:linear}.

%\hf{ 
%	Please see my rephrase of the paragaph above. 
%	
%	Let $u, v$ be two of the items covered by both component $A$ and $B$. Denote their recovered and true qualities be $x_u, x_u^*$ and $x_v, x_v^*$ respectively. We know $x_u, x_v$ should appear in both vector $x_A$ and $x_B $. That means there are two intersections $(x_u, x_u^*), (x_v, x_v^*)$ of the two linear functions $x_A = k_A x^*_A+b_A, x_B = k_B x^*_B+b_B$. As we assume there is no tie in qualities of two different items, the two linear functions cannot have two intersections unless they are identical, i.e., $k_A=k_B, b_A=b_B$, the recovered qualities $x_A, x_B$ follow the same linear transformation from the true qualities. Therefore, all items covered by $D$ can be perfectly recovered by the solution of LP \eqref{lp:linear}.
%}

\textbf{Proof of   Necessity. }   
For the ``only if'' direction, we prove its contrapositive statement. That is, if  there is no doubly-connected component $S$ that covers all items in the review graph, then  there must exist some paper assignment that induces review graph $G$ and some review scores under which the solution to LP  \eqref{lp:linear} cannot perfectly recover the true scores.  

We start with a few simplifications, that are without loss of generality. First, according to the above proof of  ``if'' direction, we  know that     papers within any doubly connected component in the review graph can be perfectly recovered. This means we can replace each of these components by a single vertex, i.e., a single reviewer,  with some linear scoring function, who reviewed all the items covered by this doubly connected component. After this transformation, the new review graph will have at most one edge connecting any two vertices. Second, to construct paper assignment that induces this review graph, we will let  each edge $e=(i,j)$ in the given review graph correspond to a unique item $e$, which is only reviewed by reviewer $i, j$, but no one else.  

Let $x \in \RR^N$ be an arbitrary vector solution to LP \eqref{lp:linear} which contains the recovered qualities of all the $N$ items. Without loss of generality, we will pick $x$ as the true quality $x^*$ since we know $x^*$ must be a feasible solution as well.  Next we will show   there exists another solution $\tilde{x}\in \RR^N$ to the LP \eqref{lp:linear} such that $\tilde{x}$ is not linear to $x^*$ (i.e., their corresponding entries do not have linear relation). This implies perfect recoverability is not possible  by definition.  

To construct $\tilde{x}$, we will craft a linear function for every reviewer $i$ determined by coefficient $k_i$ and constant $b_i$.  That is, for every item $u$ reviewed by reviewer $i$, we let $\tilde{x}_u = k_i x^*_u + b_i$, where   $k_i, b_i$ are to be determined later.  % Since $\tile{x}_u$ is linear in $x_u$ which is linear in $x^*_u$, so $\tilde{x}_u$ is linear in $x^*_u$ as well. That is, the scoring function of each reviewer $i$ to generate $\tilde{x}_u$ is still linear. 
We wish to set item $u$'s constructed score as $\tilde{x}_u$. If we could succeed in doing so, then as long as  $k_i \not = k_j$ for all $i,j$, then $\tilde{x}  $  cannot have a linear relation with $x^*$, which completes our proof.    However,  there are some constraints to be satisfied when  setting $\tilde{x}_u$ as item $u$'s constructed score, which is why we have to pick $\{k_i, b_i\}_{i\in [M]}$ and $x^*$  carefully. 
The constraints come  from   edges of the graph: each edge $e=(i,j)$ connecting reviewer $i$ and $j$ corresponds to an item $e$ that reviewer $i,j$ both reviewed. This will require the constructed scores, when viewed  from $i$'s and $j$'s perspective, have to be consistent, i.e.,    $$ k_i x^*_{e} + b_i = k_j x^*_{e} + b_j , \quad  \forall e $$ which both equal the constructed $\tilde{x}_{e}$.   

%Moreover, any set of quality $\{ x_{n} \}_{n\in [N]}$ can be a perfect recovery of some observed score $\{ y_{j}^l \}_{j,l}$ where all scores of an item is exactly its true quality.  
Since   recovery-resilience of a review graph needs to hold for any given underlying true qualities, to disapprove it  we only need to identify one set of true item qualities to satisfy our construction. Towards that end, we will use the following construction: $k_i=\sqrt{p_i}, b_i=p_i, \forall i\in [M]$, where $p_i$ is the $i$th smallest prime number from $2$. Given these $\{k_i, b_i\}_{i\in [M]}$ , we then let  
$$x^*_{e} = -\frac{b_i-b_j}{k_i-k_j} =- \frac{p_i-p_j}{\sqrt{p_i}-\sqrt{p_j}}=- \sqrt{p_i}- \sqrt{p_j} , \qquad   \text{ for all edge }e=(i,j). $$

It is easy to verify that the above construction does satisfy $ k_i x^*_{e} + b_i = k_j x^*_{e} + b_j$ for any edge $e = (i,j)$. Moreover, no two edges with $e=(i,j), e'=(i',j')$ can have their quality $\tilde{x}_e = \tilde{x}_{e'}$  because for any four $p_i,p_j,p_{i'},p_{j'}$ with at least two unique prime $p_i,p_{i'}$, it is impossible that $\sqrt{p_i}+\sqrt{p_j} = \sqrt{p_{i'}}+\sqrt{p_{j'}}$. To see this,  if we take the square of both side, this will lead to $2\sqrt{p_ip_j}-2\sqrt{p_{i'}p_{j'}} = p_i+p_j + p_{i'} + p_{j'} $. Now if we take the square of both sides again, we have $-8\sqrt{p_ip_jp_{i'}p_{j'}} = (p_i+p_j + p_{i'} + p_{j'})^2- 4(p_ip_j+p_{i'}p_{j'})$.  However, the RHS is rational, yet the LHS must be irrational since at least $p_i, p_{i'}$ are unique prime number, a contradiction.  Therefore, our construction of $\tilde{x}$ is indeed valid. % under certain choice of $\{ y_{j}^l \}_{j,l}$ and $\{k_i, b_i\}_{i\in [M]}$. 
This  concludes that the review graph with no more than one edge cannot be  recovery-resilient.

\end{proof}

\subsection{Proof of Theorem \ref{thm:linear-noise}}\label{append:proof-linear-noise}
\begin{theorem*}[Restatement]
    Convex Program \eqref{lp:linear-noise} is equivalent to \LSC with $\H= \H_{L}(C)$ in the following sense:
    for any optimal solution $(\mathbf{x}^*, \mathbf{\epsilon}^*)$ to \eqref{lp:linear-noise},
    there exists $\mathbf f^* = \{f^*_j \in \H_{L}(C)\}_{j \in [M]}$ such that $(\mathbf{x}^*, \mathbf{\epsilon}^*, \mathbf{f}^*)$ is optimal to \eqref{lp:framework}.
\end{theorem*}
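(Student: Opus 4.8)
The plan is to show that the feasible set of Convex Program~\eqref{lp:linear-noise}, read in the $(\mathbf x,\mathbf\epsilon)$ coordinates, is \emph{exactly} the projection onto those coordinates of the feasible set of \LSC~\eqref{lp:framework} with $\H=\H_{L}(C)$. Since the objective $\sum_j\sum_\ell (\epsilon_j^\ell)^2$ of both programs depends only on $\mathbf\epsilon$, equality of these projected feasible sets immediately gives that the two programs have the same optimal value and that any optimal $(\mathbf x^*,\mathbf\epsilon^*)$ of~\eqref{lp:linear-noise} lifts to an optimal $(\mathbf x^*,\mathbf\epsilon^*,\mathbf f^*)$ of~\eqref{lp:framework}. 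The key simplification is that the functional constraint $y_j^\ell = f_j(x(I_j^\ell)+\epsilon_j^\ell)$ \emph{decouples across reviewers}: the choice of $f_j$ only enters reviewer $j$'s constraints. Hence a tuple $(\mathbf x,\mathbf\epsilon)$ extends to a feasible point of \LSC if and only if, for each reviewer $j$ \emph{separately}, there is some $f_j\in\H_{L}(C)$ with $y_j^\ell=f_j(\widetilde x_j^\ell)$ for all $\ell$, where $\widetilde x_j^\ell := x(I_j^\ell)+\epsilon_j^\ell$.

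The heart of the argument is then a pointwise characterization: for a fixed reviewer $j$ with data $(\widetilde x_j^\ell,y_j^\ell)_{\ell=1}^{|I_j|}$, some $f_j\in\H_{L}(C)$ interpolates all of them if and only if the two per-reviewer constraints of~\eqref{lp:linear-noise} hold. Indeed, an affine $f_j(z)=a_jz+b_j$ interpolates the points iff they are collinear on the line $z\mapsto a_j z+b_j$; membership in $\H_{L}(C)$ forces $0\le a_j\le C$, and in fact $a_j>0$ because the standing assumption $y_j^1<\cdots<y_j^{|I_j|}$ rules out a constant $f_j$ whenever $|I_j|\ge 2$. Now collinearity of the points is, since each $y_j^\ell-y_j^{\ell-1}>0$, precisely the statement that all consecutive segments share a common slope, i.e.\ the equality constraint $\frac{\widetilde x_j^\ell-\widetilde x_j^{\ell-1}}{y_j^\ell-y_j^{\ell-1}}=\frac{\widetilde x_j^{\ell+1}-\widetilde x_j^\ell}{y_j^{\ell+1}-y_j^\ell}$; and, given collinearity, the common slope equals $\frac{y_j^\ell-y_j^{\ell-1}}{\widetilde x_j^\ell-\widetilde x_j^{\ell-1}}$, which is positive and at most $C$ exactly when $\widetilde x_j^\ell-\widetilde x_j^{\ell-1}\ge\frac{y_j^\ell-y_j^{\ell-1}}{C}$, the first constraint of~\eqref{lp:linear-noise}. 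The degenerate small cases are consistent: if $|I_j|=2$ there is no collinearity constraint (two points are always collinear) and only the slope bound remains, and if $|I_j|=1$ there is no per-reviewer constraint at all and one may simply take $f_j(z)=C(z-\widetilde x_j^1)+y_j^1\in\H_{L}(C)$.

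Assembling the two inclusions then finishes the proof. Given any feasible $(\mathbf x,\mathbf\epsilon)$ of~\eqref{lp:linear-noise}, the characterization yields for every $j$ an $f_j\in\H_{L}(C)$ (take the line through any two of reviewer $j$'s points, which passes through all of them by collinearity and whose slope lies in $(0,C]$); then $(\mathbf x,\mathbf\epsilon,\mathbf f)$ is feasible for \LSC with the same objective value. Conversely, given any feasible $(\mathbf x,\mathbf\epsilon,\mathbf f)$ of \LSC, write $f_j(z)=a_jz+b_j$ with $0<a_j\le C$; setting $\widetilde x_j^\ell:=x(I_j^\ell)+\epsilon_j^\ell$ gives $\widetilde x_j^\ell-\widetilde x_j^{\ell-1}=(y_j^\ell-y_j^{\ell-1})/a_j\ge(y_j^\ell-y_j^{\ell-1})/C$ and common consecutive slopes $1/a_j$, so $(\mathbf x,\mathbf\epsilon)$ is feasible for~\eqref{lp:linear-noise} with the same objective value. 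Hence the two optimal values coincide, and lifting any optimal $(\mathbf x^*,\mathbf\epsilon^*)$ of~\eqref{lp:linear-noise} via the first inclusion produces a feasible $(\mathbf x^*,\mathbf\epsilon^*,\mathbf f^*)$ of \LSC attaining that common value, hence optimal.

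The main obstacle I anticipate is not any hard computation but the careful bookkeeping of the boundary and degeneracy conditions: ensuring the interpolating $f_j$ one constructs genuinely lies in $\H_{L}(C)$ (monotone increasing with slope at most $C$), in particular excluding the degenerate slope-$0$ "constant reviewer" case by invoking the strict ordering $y_j^1<\cdots<y_j^{|I_j|}$, and handling reviewers with only one or two assigned items where some constraints of~\eqref{lp:linear-noise} are vacuous. A secondary point worth stating cleanly is that ``equivalence'' here means agreement of optimal values (and hence of the recovered item ranking from $\mathbf x$), not literal identity of the two programs; and since~\eqref{lp:linear-noise} is a convex quadratic program with linear constraints, nonempty feasible region (the ground truth $(\mathbf x^*,\mathbf\epsilon^*)$ is feasible) and objective bounded below by $0$, its minimum is attained, so ``any optimal solution'' is a well-posed object.
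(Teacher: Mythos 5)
Your proposal is correct and follows essentially the same route as the paper's proof: show that the feasible set of Program~\eqref{lp:linear-noise} in the $(\mathbf x,\mathbf\epsilon)$ coordinates coincides with the projection of the feasible set of \LSC with $\H=\H_L(C)$, by interpolating reviewer $j$'s points with an affine $f_j$ in one direction and reading off collinearity plus the slope bound in the other. Your write-up is in fact somewhat more careful than the paper's, which verifies the equality (collinearity) constraint but glosses over checking that the constructed $f_j^*$ has slope in $(0,C]$ and the degenerate $|I_j|\le 2$ cases that you handle explicitly.
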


\begin{proof}
	For any optimal solution $(\mathbf{x}^*, \mathbf{\epsilon}^*)$ to \eqref{lp:linear-noise}, we can linearly interpolate points $\{ ( x^*(I^\ell_j) + \epsilon^{\ell*}_j, y^\ell_j)\}_{l}$ and construct the linear function
   $f^*_j(x) = \alpha x + \beta$ with $\alpha = \frac{y^\ell_j - y^{\ell-1}_j}{\widetilde x^\ell_j- \widetilde x^{\ell-1}_j}$ and $\beta = y^\ell_j - \alpha x^*(I^\ell_j)$, for all $j \in [M], 2 \leq \ell \leq |I_j|$.
	Therefore, $(\mathbf{x}^*, \mathbf{\epsilon}^*, \mathbf{f}^* = \{f^*_j\}_{j \in [M]})$ is a \emph{feasible solution} to \LSC and thus the optimal objective of \LSC is at least that of LP \eqref{lp:linear-noise}.

	To show that the optimal objective of LCS is at most that of LP \eqref{lp:linear-noise}, observe that any feasible solution to \LSC must satisfies the \textit{linear equality constraint} of LP \eqref{lp:linear-noise} because
	\begin{align*}
		\frac{1}{\alpha_j} = \frac{\widetilde x^\ell_j - \widetilde x^{\ell-1}_j}{y^\ell_j - y^{\ell-1}_j} = \frac{\widetilde x^{\ell+1}_j - \widetilde x^\ell_j}{y^{\ell+1}_j - y^{\ell}_j}  \quad \forall j \in [M], 2 \leq \ell \leq |I_j|
	\end{align*}
	which is precisely the \textit{linear equaltiy constraints} of LP \eqref{lp:linear-noise}. This implies that the feasible region of LP \eqref{lp:linear-noise} contains the feasible region of \LSC restricted to $\mathbf{x}, \mathbf{\epsilon}$. Therefore, the optimal objective of \LSC is also at most that of LP \eqref{lp:linear-noise}, as desired.
\end{proof}

\subsection{Proof of Theorem \ref{thm:maxtrix}}\label{append:maxtrix}

\begin{theorem*}[Restatement]
    The $\ell_2$ Matrix Seriation \eqref{def:seriation} problem can be solved by the following Functional Optimization Problem.
	% Assuming all reviewer functions are strictly monotonically increasing, the  can be reduced to
    \begin{align}\label{lp:seriation}
     \min_{\mathbf{x, f}} & \quad \sum_{j=1}^M \sum_{\ell=1}^{|I_j|} (\epsilon_j^\ell)^2 \\ \nonumber
     \textup{s.t. } & \quad y_j^\ell = f_j(x(I_j^\ell) ) + \epsilon_j^\ell \tag*{$\forall j \in [M], \ell \leq |I_j|$} \\ \nonumber
     & \quad f_j \in \H_{\mono} \tag*{$j\in[M]$}
    \end{align}
\end{theorem*}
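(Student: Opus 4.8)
The plan is to establish a bijective correspondence between feasible solutions of the $\ell_2$ Matrix Seriation problem (Definition \ref{def:seriation}) and feasible solutions of the Functional Optimization Problem \eqref{lp:seriation-0}, preserving the objective value. The data matrix $A$ plays the role of the review scores: each row $j$ of $A$ corresponds to reviewer $j$, and the observed entries $\{(i,j): A_{ij} \neq *\}$ determine the item assignment sets $I_j$. We identify $y_j^\ell$ with $A_{j, I_j^\ell}$ and the sought matrix $B$ with the fitted values $f_j(x(I_j^\ell))$, so that the objective $\sum_j \sum_\ell (\epsilon_j^\ell)^2$ becomes exactly $\norm{A-B}_2^2$.

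\textbf{The forward direction.} Given a feasible $B$ for matrix seriation, note that the seriation constraint --- that the column-ordering of entries is consistent across all rows --- means there is a common linear order $\pi$ on $[n]$ such that $B_{i,q} \leq B_{i,p}$ whenever $\pi(q) \leq \pi(p)$, for every row $i$. (One must handle ties in $B$ carefully: the constraint as stated is a bi-implication, so equal entries in one row force equal entries in all rows where both are observed; with care one can still extract a total order consistent with all rows.) Define $x$ by setting $x(i) = \pi(i)$ (or any strictly increasing embedding of the order), and for each row $j$ define $f_j$ on the finite set $\{x(i) : A_{ij} \neq *\}$ by $f_j(x(i)) = B_{ij}$; because $B$'s entries in row $j$ are monotone in $\pi$, this partial map is monotone increasing, and we extend it to a fully monotone increasing $f_j \in \H_{\mono}$ on all of $\RR$ by linear interpolation between consecutive points and constant (or gently increasing) continuation outside. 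Setting $\epsilon_j^\ell = y_j^\ell - f_j(x(I_j^\ell)) = A_{j,I_j^\ell} - B_{j,I_j^\ell}$ gives a feasible point of \eqref{lp:seriation-0} with the same objective.

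\textbf{The reverse direction.} Given feasible $(\mathbf{x}, \mathbf{f}, \boldsymbol{\epsilon})$ for \eqref{lp:seriation-0}, define $B_{ij} = f_j(x(i))$ for observed entries (and arbitrarily, say $0$, for unobserved ones, which do not affect the objective). Since each $f_j$ is monotone increasing, the order of $\{B_{ij}\}_i$ within row $j$ agrees with the order of $\{x(i)\}_i$, which is the \emph{same} order for every row; hence the seriation consistency constraint holds, and $\norm{A-B}_2^2 = \sum_{(i,j): A_{ij}\neq *}(A_{ij}-f_j(x(i)))^2 = \sum_j\sum_\ell (\epsilon_j^\ell)^2$. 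Combining both directions, the optimal values coincide and an optimal solution of one yields an optimal solution of the other.

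\textbf{The main obstacle} I anticipate is the careful treatment of ties and of the bi-implication in Definition \ref{def:seriation}: strict monotonicity of $x$ (no ties among item qualities, as assumed in the paper) versus possible equalities among entries of $B$, and making sure the extracted common column-order is genuinely total and simultaneously consistent with all partially-observed rows. This is a combinatorial bookkeeping point rather than a deep difficulty --- one can, e.g., first argue that it suffices to consider $B$ with no ties in any row (perturbing equal entries consistently across rows keeps feasibility and does not increase the loss in the relevant regime), or invoke that the infimum is unchanged. The rest is the routine interpolation/extension argument for building monotone $f_j$ from finitely many monotone sample points, exactly as in the proof of Theorem \ref{thm:linear-noise}.
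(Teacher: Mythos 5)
Your proposal is correct and its core reduction is the same as the paper's: identify row $j$ of the data matrix with reviewer $j$, the observed entries with the assignment sets $I_j$, and the fitted matrix via $B_{j,I_j^\ell}=f_j(x(I_j^\ell))=y_j^\ell-\epsilon_j^\ell$, so that $\norm{A-B}_2^2$ equals the FOP objective and monotonicity of the $f_j$ yields the cross-row order-consistency constraint. That is exactly the paper's argument for what you call the reverse direction. The substantive difference is that the paper stops there: it shows that an optimal FOP solution produces a \emph{feasible} seriation matrix of equal cost, but never argues that no other feasible $B$ could do better. Your forward direction --- extracting a common total (pre)order from the bi-implication constraint, embedding it as $\mathbf{x}$, and building each $f_j\in\H_{\mono}$ by monotone interpolation of the finitely many sample points $(x(i),B_{ij})$ --- supplies precisely the missing inequality (seriation optimum $\geq$ FOP optimum) that makes the equivalence of optima, and hence the word ``solved,'' rigorous. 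Your attention to ties is also warranted and handled correctly: the bi-implication forces any within-row tie to propagate to all rows, so tied columns can be assigned equal $x$-values and the partial maps remain well defined and monotone. In short, you reproduce the paper's proof as one half of your argument and add the converse half that the paper leaves implicit; your version is the more complete of the two.
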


\begin{proof}
We can represent reviewers' scores as a matrix $A \in \RR^{m\times n}$ where each row represents the scores given by a specific reviewer, and each column represents the scores received by a specific item. Therefore, $A_{i,j}$ represents reviewer $i$'s score for item $j$. Note that $A_{i,j}$ is a partial matrix. $A_{i,j}$ is empty if reviewer $i$ does not review item $j$.
Starting with an empty matrix $B$, we fill $B_{j, I_j^\ell}$ with $f_j(x(I^\ell_j)) = y_j^\ell - \epsilon_j^\ell$ for all $j \in [M], \ell \leq |I_j|$, which is obtained from the solution of the Functional Optimization Problem \eqref{lp:seriation}.

Note that
\begin{align*}
	\norm{A-B}_2 = \sum_{j=1}^M \sum_{\ell=1}^{|I_j|} (y_j^\ell - f_j(x(I_j^\ell)))^2 = \sum_{j=1}^M \sum_{\ell \leq |I_j|} (\epsilon_j^\ell)^2
\end{align*}
which is exactly what FOP~\eqref{lp:seriation} minimizes.

In addition, because all scoring functions are monotonically increasing, it means each row of $B$ preserves the order according to items' qualities $x$. In other words, for any $i, j \in [m], p, q \in [n]$ we have
\begin{align*}
   B_{i,q} \leq B_{i,p} \iff B_{j,q} \leq B_{j,p}
\end{align*}
% Then, we have $f_i(x_q) \leq f_i(x_p)$ and $f_j(x_q) > f_j(x_p)$. Since both $f_i$ and $f_j$ are monotone functions, we get a contradiction that $x_q \leq x_p$ and $x_q > x_p$. Therefore, the matrix $B$ satisfies condition \eqref{cond:seriation}.

Therefore, \eqref{lp:seriation} solves the matrix seriation problem.
\end{proof}

\section{Additional Details and Results for Experiments}\label{append:experiment}
In this section, we provide a detailed description of our data generation procedure, as well as various experiments to better understand the strength and limitation of our calibration framework. 
%include ablation study results for the noiseless setting ($\sigma=0$) in Figure~\ref{fig2:additional}, and present an additional ablation study result that investigates the effect of varying conference size in Figure~\ref{fig3}.

\subsection{Dataset Generation} Our synthesized data generation follows the procedure below. For the ease of reproduction, we also include the implementation details in the our released code in supplemental materials.

\paragraph{Paper qualities.} Without loss of generality, we restrict the true qualities of papers to be within $[0, 10]$. We randomly sample the true quality of each paper from a Gaussian distribution $x \sim \N(5, 1.6)$. This choice of parameter ensures roughly $99.8\%$ of the true qualities fall within $[0.2, 9.8]$. We truncate the value to $0$ or $10$ if any sample falls below or above the $[0, 10]$ interval.

\paragraph{Paper assignment and review graph.} We randomly assign each paper to a pool of reviewers under the natural constraint that a paper should be reviewed at least $\lfloor kM/N \rfloor$, according to the Algorithm \ref{alg:assign-random}. 
%A reviewer will be taken out of the pool once it is assigned $k$ papers. In the end, the majority of papers will get reviewed by exactly $k$ reviewers. (assuming $N = M$). 
Meanwhile, for the experiment shown in Figure \ref{fig:doubly-robustness}, we generate the review graph of double connectivity, according to Algorithm \ref{alg:assign-doubly}. The function $\texttt{choose}(S, k)$ there is to randomly sample from the set $S$ for $k$ elements without replacement. 

\begin{algorithm}
	\caption{\texttt{Random Assignment}}
	\label{alg:assign-random}
	\begin{algorithmic}[1]
		\INPUT $N$ papers, $M$ reviewer, $k$ papers per reviewer 
		\STATE $b \gets \lfloor kM/N \rfloor $ \COMMENT{minimum number of reviews requires for each paper}
		\STATE $V \gets \emptyset$ \COMMENT{set of papers that have met the minimum requirement}
		\STATE $\texttt{usage} \gets \{\} $  \COMMENT{map for the number of reviews of each paper}
		\FOR {$i \in \{1 \dots M\}$}
		\STATE $S_1 \gets \texttt{choose}\left( [N] / V, \min(k, N-|V|)\right)$ \\
		\COMMENT{sample as many papers that have not met the minimum requirement}
		\STATE $S_2 \gets \texttt{choose}\left(V, k-|S_1|)\right)$ \\
		\COMMENT{sample the remaining papers so the reviewer gets $k$ papers}		
		\STATE $T_i \gets S_1 \cup S_2$
		\FOR {$j \in I_i$}
		\STATE $\texttt{usage}[j] \gets \texttt{usage}[j]+ 1 $
		\IF {$\texttt{usage}[j] >= b $}
		\STATE $V \gets V + j $
		\ENDIF
		\ENDFOR
		\ENDFOR
		\STATE \textbf{return} $\{ I_i\}_{i\in [M]}$
	\end{algorithmic}
\end{algorithm}

\begin{algorithm}
	\caption{\texttt{Random Assignment with Double Connectivity}}
	\label{alg:assign-doubly}
	\begin{algorithmic}[1]
		\INPUT $N$ papers, $M$ reviewer, $k > 2$ papers per reviewer 
		\STATE $b \gets \lfloor kM/N \rfloor $  \COMMENT{minimum number of reviews requires for each paper}
		\STATE $\texttt{usage} \gets \{\} $  \COMMENT{map for the number of reviews of each paper}
		\STATE $ V \gets \emptyset $ \COMMENT{set of papers that have met the minimum requirement}
		\STATE $ I_1 \gets \texttt{choose}([N], k) $ \COMMENT{sample any k papers for reviewer 1}
		\STATE $ T \gets I_1 $ \COMMENT{set of papers that have been assigned}
		\STATE $ \texttt{usage}[j] \gets 1,\quad \forall j \in I_1 $
		\FOR {$i \in \{2 \dots M\}$}
		\STATE $S_1 \gets \texttt{choose}([N]/T, \max(k-2, N-|T|))$ \\ 
		\COMMENT{sample as many unassigned papers}
		\STATE $S_2 \gets \texttt{choose}(T/V, \min(2, |T| - |V|))$ \\
		 \COMMENT{sample as many as 2 assigned papers that have not met minimum requirement }
		\STATE $S_3 \gets \texttt{choose}(V, k-|S_1|-|S_2|)  $ \\
		\COMMENT{sample the remaining papers so the reviewer gets $k$ papers}		
		\STATE $I_i \gets S_1 \cup S_2 \cup S_3$
		\STATE $T \gets T \cup I_i$
		\FOR {$j \in I_i$}
		\STATE $\texttt{usage}[j] \gets \texttt{usage}[j]+ 1 $
		\IF {$\texttt{usage}[j] >= b $}
		\STATE $V \gets V + j $
		\ENDIF
		\ENDFOR
		\ENDFOR
		\STATE \textbf{return} $\{ I_i\}_{i\in [M]}$
	\end{algorithmic}
\end{algorithm}

\paragraph{Paper scores.} We randomly assign a scoring function to each reviewer to compute their review scores.

	To generate a linear scoring function, $f(x) = kx + b$, we draw the parameter $k \sim \mathcal{U}(0, 2)$ and $b \sim \N(0,2)$.

	To generate a concave function, we take a random linear combination between a set of monotone concave functions $\{c_2 \cdot \sqrt{x}, c_3 \cdot \sqrt[3]{x}, c_4 \cdot \sqrt[4]{x}\}$, where the weight of each function is sampled uniformly random according to $c_p \sim \mathcal{U}(1, \frac{10}{\sqrt[p]{10}}), \forall p\in \{1,2,3,4\}$.

	To generate a convex function, we take a random linear combination between a set of monotone convex functions $\{c_1 \cdot x^{2}, c_2 \cdot x^{2.5}, c_3 \cdot x^3\}$ where the weight of each function is sampled uniformly random according to $c_1, c_2, c_3 \sim \mathcal{U}(0,1)$.

	To generate an arbitrary monotone function, we randomly sample $k$ values from $[0,10]$ in increasing order and assign them to papers in the corresponding order.

	In addition, for the noisy case, a zero-mean Gaussian error $\epsilon \sim \N(0, \sigma)$ is added to the true qualities of the papers for each reviewer-paper pair as the percetion error before applying each reviewer's scoring function.

\subsection{Experiments for Remark \ref{rmk:doubly}} \label{append:remark3.5}
%\hf{add some description here and also move the figure to below the section title. May be difficult to see which section this figure is for currently. }  
\begin{figure}[ht]
	\centering
	\includegraphics[width=0.5\textwidth]{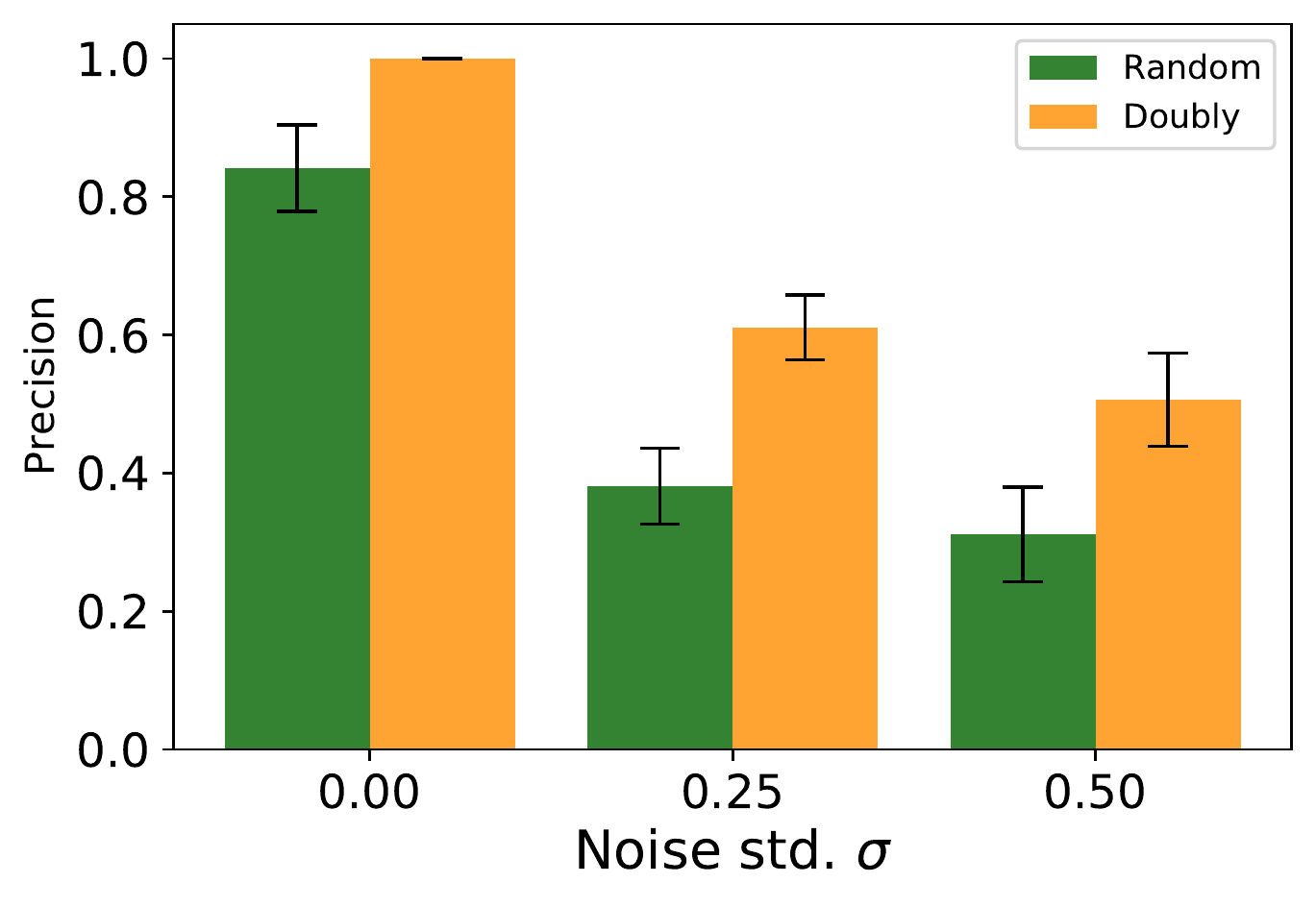}
	\caption{
			Performance comparisons of LSC with linear constraints between \textbf{randomly generated review graph} (green bar) and \textbf{randomly generated doubly connected reviewed graph} (orange bar)  under different level of noise scale with $N=1000, M=350, k=5$. 
		}
	\label{fig:doubly-robustness}
\end{figure}
In Figure \ref{fig:doubly-robustness}, we are able to directly verify our Theorem \ref{thm:perfect} that a randomly generated graph of double connectivity is indeed recovery-resilient, as it is able to achieve the bona fide perfect precision in the noiseless setting. In addition, the topological structure of double connectivity is also less prone to the perception noise, compared to a review graph generated from a completely random assignment.

\subsection{Additional Experiments and Discussions on the Effects of Prior Knowledge} \label{append:prior}
\begin{figure}[h]
	\centering
	\includegraphics[width=0.5\linewidth]{}
	\caption{
			Performance comparisons in the noisy ($\sigma=0.5$) and mixed setups with linear scoring functions and arbitrary monotone  functions. Only LSC (mix) has prior knowledge of every reviewer's scoring function type. 
		}
	\label{fig:robustness-noisy}
\end{figure}

In Section \ref{sec:robustness}, we demonstrate the robustness of our model under the situation when the prior knowledge is misspecified. In Figure \ref{fig:robustness-noisy}, we can see that, under the noisy setting, our model still outperforms all the baselines. However, it turns out that the influence of whether the prior knowledge is misspecified or not is rather minimal, compared the noiseless setting. Similar pattern is observed in Table \ref{tab:mixed}, where the scoring functions consist of $1/3$ monotonic increasing function, $1/3$ convex functions, and $1/3$ concave functions (all randomly generated). Only LSC (mix) has the exact prior knowledge of  every reviewer's scoring function type. The LSC model can effectively utilize such the prior knowledge. We can see significant improvement of the precision under noiseless setting, though such improvement is minor in the more noisy setting. On one hand, our model is indeed able to fit for a wide range of function classes so that more prior knowledge can lead to better calibration. On the other hand, these observations also suggest that with the presence of more perception noise, it could be more beneficial to pick a robust model rather than fixating on an accurate prior knowledge for better calibration results. This justifies the usage of our model for the cases even without strong prior knowledge. 
%  \jw{Do we also want to show the performance on convex, concave scoring function here? }

\begin{table*}
	\begin{center}
	\begin{tabular}{ |c | c c || c c|}
	\hline
	\diagbox{Model}{Metric} & Pre. (\%) & Avg. Gap & Pre. (\%) & Avg. Gap   \\
	\hline
	Average & 39.9 $\pm$ 2.5 & 0.70 $\pm$ 0.06  & 38.6 $\pm$ 3.5       & 0.76 $\pm$ 0.07   \\
	QP  & 76.4 $\pm$ 3.0 & 0.13 $\pm$ 0.02  & 69.8 $\pm$ 4.5   & 0.21 $\pm$ 0.05    \\
	Bayesian   &  54.4 $\pm$ 3.1 & 0.40 $\pm$ 0.03  &  51.1 $\pm$ 3.6 & 0.48 $\pm$ 0.04 \\
	LSC (mono) & 76.4 $\pm$ 3.9 & 0.13 $\pm$ 0.03 &  68.8 $\pm$ 3.2	& 0.22  $\pm$ 0.02	 \\
	LSC (linear) & 76.4 $\pm$ 3.8 & 0.14 $\pm$ 0.02	 & 67.7 $\pm$ 3.1  & 0.22 $\pm$ 0.03	\\
	LSC (mixed) & \textbf{93.2 $\pm$ 2.5} & \textbf{0.02 $\pm$ 0.01} & \textbf{71.1 $\pm$ 3.4} & \textbf{0.18 $\pm$ 0.03}  \\
	\hline
	\end{tabular} \\
	\end{center}
	\caption{Experimental results for \textbf{mixed scoring functions} setting on the baselines and LSC with monotone,  linear or mixed (given the prior knowledge) constraints. Each entry contains the mean and standard deviation   over 20 trials. The table on the left and right side respectively shows the results in the \textbf{noiseless} ($\sigma=0$) and \textbf{noisy} ($\sigma=0.5$) setting. }
	\label{tab:mixed}
\end{table*}

\subsection{Additional Empirical Study}\label{append:additional-Exp}
In real applications such academic conferences, the calibration framework could face very different setups. For example, the PKC-2020~\cite{Kiayias2020} conference has 180 submission with an acceptance rate of 0.24 while the AAAI-2020~\cite{AAAI2020} conference has 8800 submitted papers. It is crucial to investigate and understand the performance of our model at different scales of hardness. Therefore, we conduct a set of empirical studies with datasets generated by different hyperparameters.
%We only present results for the noisy setting in Figure~\ref{fig1:ablation}. 
%The trends observed in noiseless setting are the same. We thus defer the plots to the appendix. 
% and evaluate the models in the first setting where reviewers' scoring functions are linear.
%In each set of experiments, we find that our LSC with linear constraints~\eqref{lp:linear-noise} consistently outperforms LSC with monotone constrains~\eqref{lp:ordinal-noise} as well as the baseline.

\paragraph{The number of papers $k$ assigned to each reviewer}
%The number of paper assigned to each reviewer is a clear factor that can substantially affects the model's ability to select the papers of best qualities.
 %reviewer could provide to each paper, the easier it is to calibrate the paper qualities. Unfortunately,
Since   reviewers can professionally review a large amount of papers in practice,   we compare the algorithm performance under different $k$,  changing from $3$ to $7$, as shown in the first column of Figure~\ref{fig:paper_per}. We observe improved performance as the number $k$ of papers per reviewer increases.  Interestingly, it turns out that $k=6$ serves as a sweet spot to balance reviewers' workload and the calibration performance since after $k=6$ the calibration quality starts to increases only mildly. Our model LSC (linear) can always perfectly recover the true qualities of the best papers under the noiseless case even when each reviewer only reviews 3 papers.  %Following the Theorem \ref{thm:perfect}, 
%  In addition, we observe a significant drop of marginal improvement under both noisy and noiseless settings, as we increase $k$. Interestingly, $k=6$ is the sweet spot to balance the calibration performance and reviewers' workload for both settings.
%As we increase $k$, the program can not only leverage more partial ordering information, but also provide a more accurate estimate of reviewers' scoring functions. As shown in the first column of Figure~\ref{fig1:ablation}, we observe a precision improvements for all models as we increase $k$. In particular, the precision of LSC (mono) improves by $28\%$ in the noiseless case, and by $49\%$ in the noisy case.
%(from $68\%$ to $96\%$)
%(from $37\%$ to $86\%$)
% \hf{why noiseless case is not 100 percent correct? Also, as we disscussed in Wednesday meeting, we should focus more on noisy cases. Our theory also proved that noiseless case should have 100\% accuracy. The only reason it is not 100\% is because the paper assignment procedure was bad, not because our algorithm is bad, so the numbers there do not illustrate anything about our algorithm. it illustrates only the issue of our review matching process. }

\begin{figure}
	\includegraphics[width=0.5\linewidth]{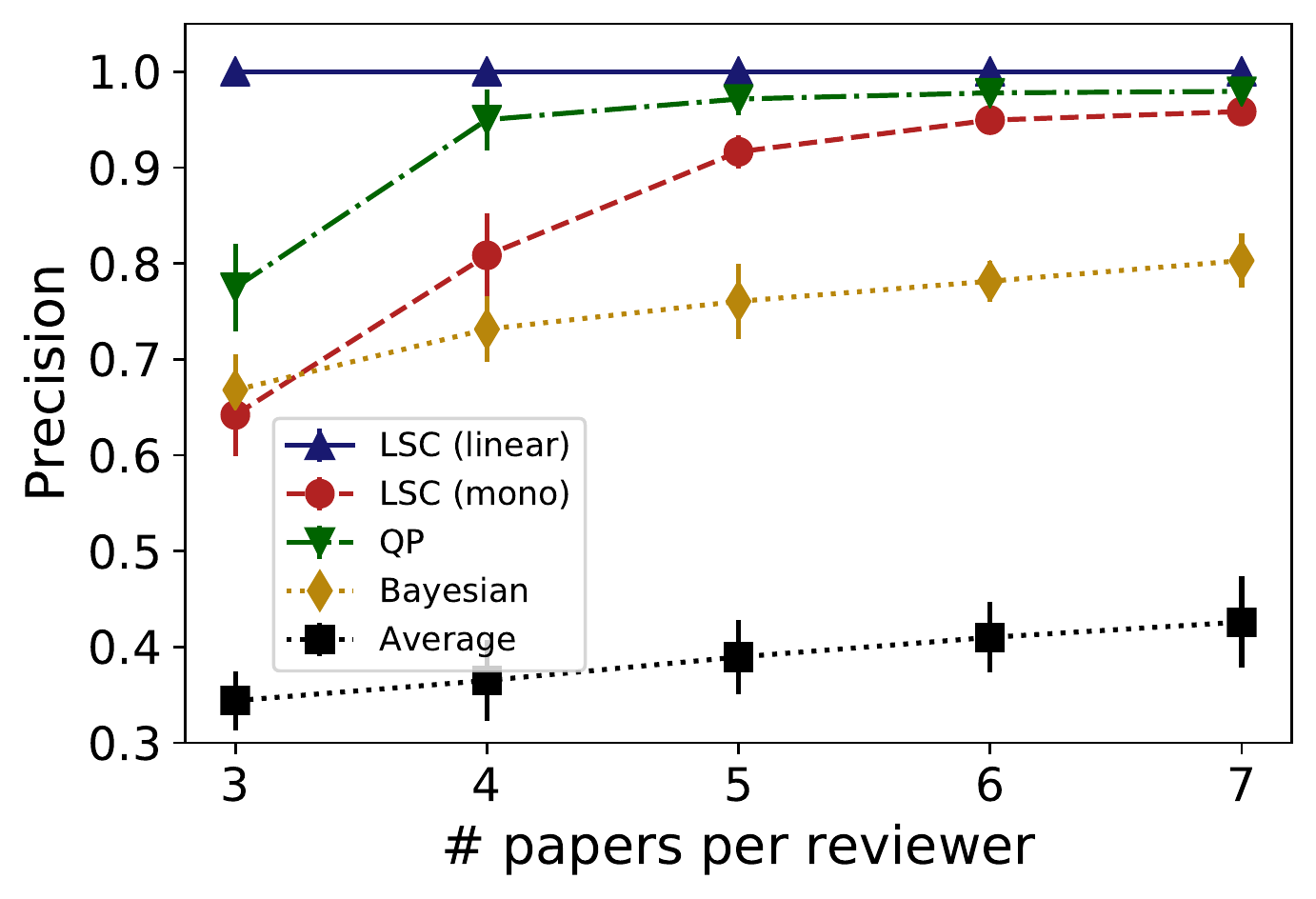}
	\includegraphics[width=0.5\linewidth]{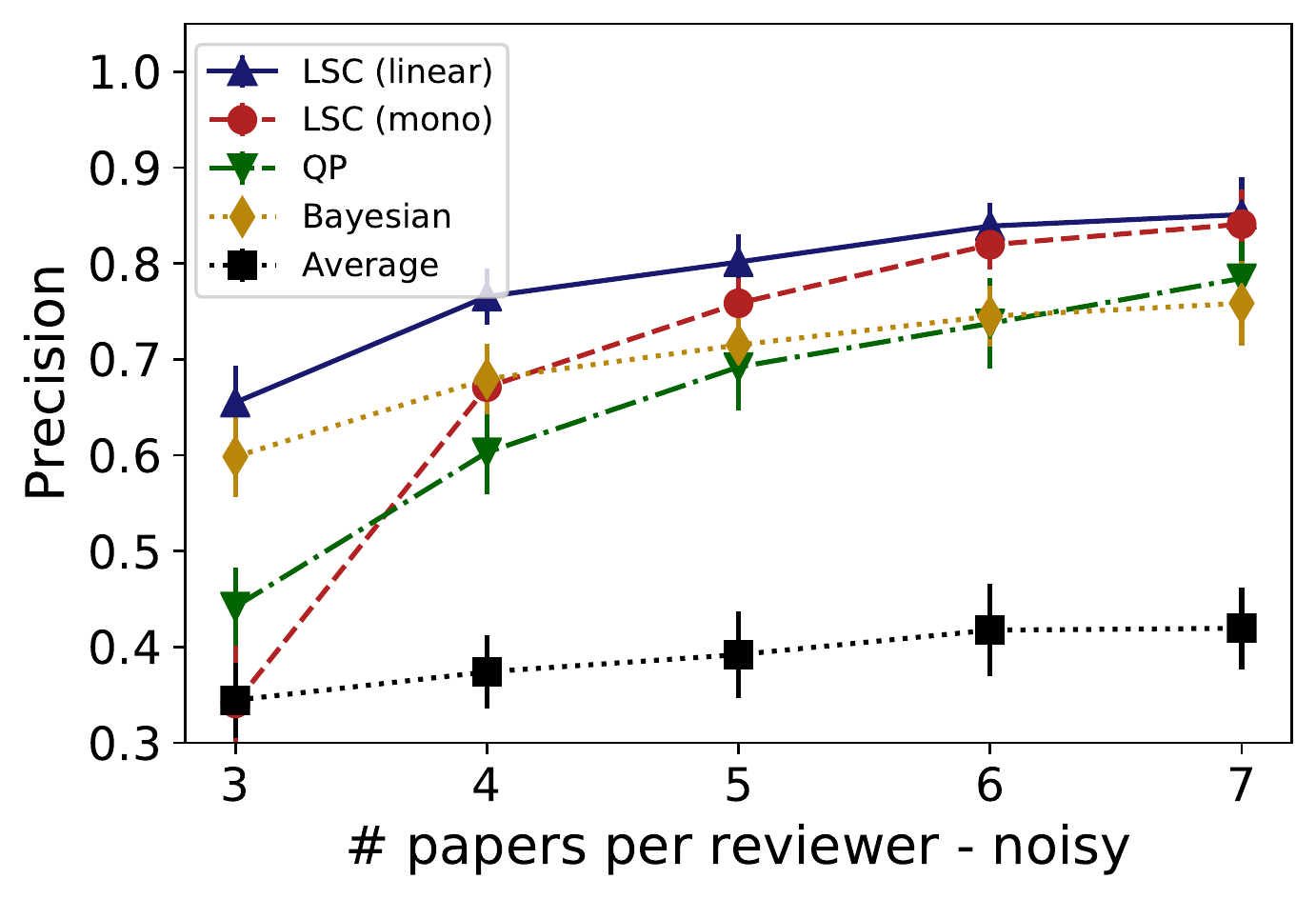}
	\caption{
		The experiments evaluate the performance of baseline models as well as the LSC under monotone or linear constraints. The first plot shows how the algorithms' performance changes according to $k$, \textbf{the number of papers assigned to each reviewer} in the \textbf{noiseless} $(\sigma=0)$ setting, the second plot shows the performance change in the \textbf{noisy} $(\sigma=0.5)$ setting. 	}
	\label{fig:paper_per}
\end{figure}

%\paragraph{How many reviewers do we need?}
\paragraph{The paper to reviewer ratio ($N : M$)}
%Other than the papers assigned to each reviewer, it is also important to have enough reviewers for a successful calibration.
While we assume a $1:1$ ratio between the number of papers and reviewers in our standard setup, there is sometimes less reviewers than papers in large academic conferences.\footnote{For example, NeurIPS-2019~\cite{nips2019} receives $6743$ submissions and has  around $4500$ reviewers, leading to a paper to reviewer ratio  around $3:2$.} To understand the algorithm performance under different paper to reviewer ratios, we test a $3:2$ ratio and $2:1$ ratio by fixing other hyperparameters (e.g., $k=5, N=1000$). As can be seen in Figure~\ref{fig:ratio},  comparing to the performance drop with a smaller $k$ in the Figure~\ref{fig:paper_per}, the decrease in the number of reviewer has similar effect on our models, but is a relatively less important factor for the performance. Our model LSC (linear) can always perfectly recover the true qualities of the best papers under the noiseless case even when the reviewer to paper ratio is $1:2$.

%~\eqref{def:double-connect} for our \textit{review graph}~\eqref{def:review-graph},
%and therefore our LSC (linear) may not perfectly recover the ordering of the papers.
%From column 3 of Figure~\ref{fig1:ablation}, we indeed observe a performance drop of $9\%$ for LSC (linear)~\eqref{lp:linear-noise} and baseline, and a $18\%$ drop for LSC (mono)~\eqref{lp:ordinal-noise} when the number of reviewers decrease by half.

\begin{figure}
	\includegraphics[width=0.5\linewidth]{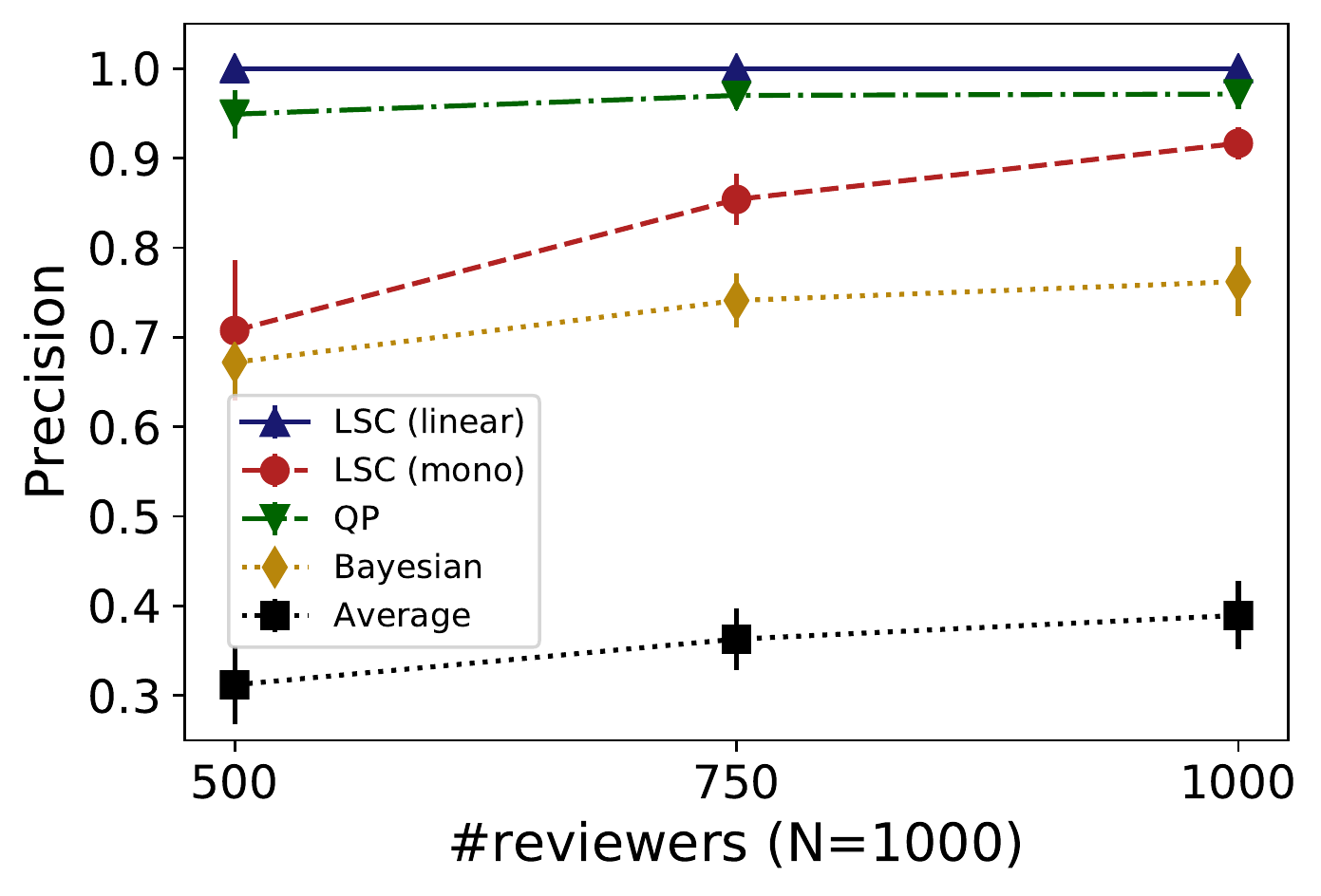}
	\includegraphics[width=0.5\linewidth]{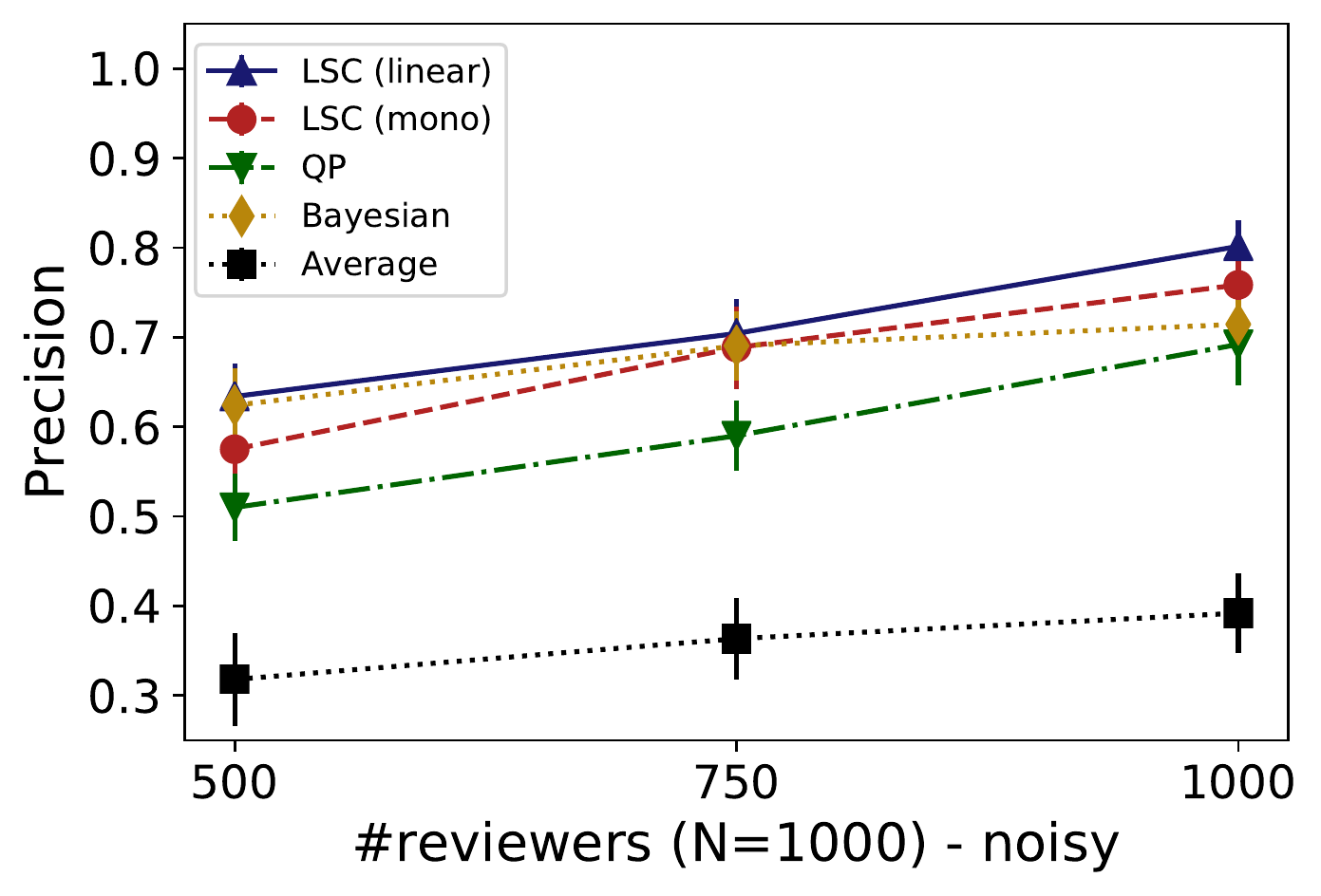}
	\caption{
		The experiments evaluate the performance of baseline models as well as the LSC under monotone or linear constraints. The first plot shows how the algorithms' performance changes according to $N : M$, \textbf{the paper to reviewer ratio} in the \textbf{noiseless} $(\sigma=0)$ setting, the second plot shows the performance change in the \textbf{noisy} $(\sigma=0.5)$ setting.
	}
	\label{fig:ratio}
\end{figure}

% \paragraph{Conference size $N$ and $M$} To demonstrate the robustness of our models on varying conference size, we additionally investigate the effect of changing the conference size $N$ and $M$. The number of papers submitted to an academic conference ranges from several hundreds to multiple thousands.  We evaluate our algorithms on a small conference of 100 submissions, a medium conference of 1000 submissions, and a large conference of 5000 submissions. In each case, we maintain a one to one ratio between the number of papers and reviewers. The experiments results in Figure~\ref{fig:conf_size} shows that our models performance remain stable across varying conference size. \jw{This experiment needs to be either remove or rerun}

% \begin{figure}
% 	\includegraphics[width=0.5\linewidth]{}
% 	\includegraphics[width=0.5\linewidth]{}
% 	\caption{
% 		The experiments evaluate the performance of baseline model as well as the LSC under monotone or linear constraints. The first plot shows how the algorithms' performance changes according to conference size in the noiseless $(\sigma=0)$ setting, the second plot shows the performance change in the noisy $(\sigma=0.5)$ setting.
% 	}
% 	\label{fig:conf_size}
% \end{figure}

\paragraph{The noise scale $\sigma$}
All the previous empirical studies are to understand the impact from the structure of the \emph{review graph}. Finally, we explore a different dimension that adds difficulty to calibration, i.e., the noise scale of perception. % As we assume the perception error always exists in any reviewer's judgement, we compare the model performance at different noise scales
Results are shown in Figure~\ref{fig:noise-level}. We can see that while the performance our proposed calibration model degrades gracefully as the noise scale increases, it steadily outperforms all the baseline methods. Among them, the Bayesian model is most robust against the noise, while the QP algorithm is very sensitive to noise. Moreover, as noise scale increases, the advantage of  LSC (linear) over LSC (mono) gradually disappears. This is because large noise level makes the prior knowledge of linear scoring function less useful, as we have pointed out in Appendix \ref{append:prior}. 
%Nevertheless, even with noise scale $1$, we observe that our algorithm can still recover $\approx 75 \%$ of the top $n$ papers whereas the baseline cannot recover  50\% even under no noise.  
% the LSC (mono) model can reverse the decreasing trend of its precision. In addition, when the noise scale is larger than $0.75$, the performance of LSC is almost the same with or without the linear constraints. To some degree, this reflect the robustness of our model design, as its constraints could model the linear function perturbed by large noises just as a monotone function.
%This is a sign that we may consider the linear function rather as a monotone functions under large noise

%Noise is introduced in our framework to handle review scores with inconsistent relative orders. To investigate the effect of increasing noise, we set the noise standard deviation $\sigma \in [0, 0.25,0.5,0.75,1]$ and plot precision results in Figure~\ref{fig2:noise}. While the baseline is relatively robust against noise, our LSC (mono)~\eqref{lp:ordinal-noise} and LSC (linear)~\eqref{lp:linear-noise} both show a decreasing trend as noise increase. In particular, LSC (linear)~\eqref{lp:linear-noise}'s precision drops by nearly $30\%$ when $\sigma=1$. As real-world dataset is often noisy and less structured, it remains a challenge to address peer review calibration in the presence of noise. \jeff{Not sure if the conclusion here is necessary}

\begin{figure}
	\centering
	\includegraphics[width=0.5\linewidth]{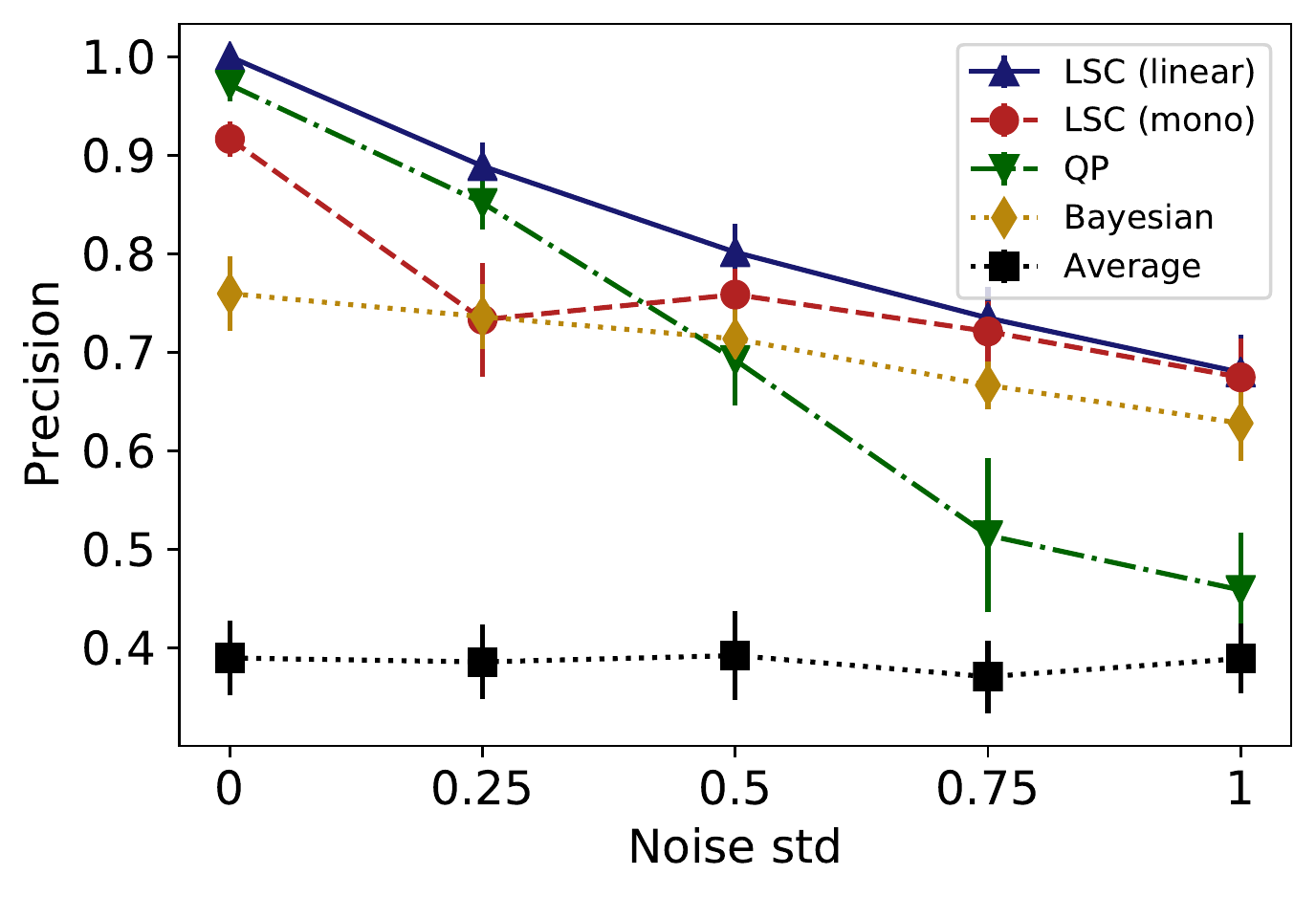}
	\caption{
		The experiments evaluate the performance of baseline models as well as the LSC under monotone or linear constraints. The plot shows how the algorithms' performance changes according to $sigma$, \textbf{the noise scale}.
	}
%	\caption{
%			Performance comparisons in the noisy ($\sigma=0.5$) and mixed setups with linear scoring functions and arbitrary monotone  functions. Only LSC(mix) has prior knowledge of  every reviewer's scoring function type. 
%		}
	\label{fig:noise-level}
\end{figure}

\subsection{Performance Comparison in Rank-Aware Metrics}
In this section, we further investigate the quality of our calibration through rank-aware metrics, that are typically used in the evaluation of information retrieval and recommendation system.
%~\cite{schutze2008introduction}. 
Specially, we consider the following metrics that are designed to measure different aspects of ranking:

\begin{itemize}[leftmargin=*]
	\item \textbf{Average L1} $\omega(\mathbf{x};\mathbf{x}^*) = \frac{1}{N}\sum_{i \in [N]}| \textup{Rank}(i|\mathbf{x}^*) - \textup{Rank}(i|\mathbf{x}) |$ where $\textup{Rank}(i|\mathbf{x})$ is the rank of paper $i$ under scores $\mathbf{x}$. It measures the   L1 distance of each item's ranking given respectively by the true qualities and the recovered qualities. It is used to quantify how close the recovered ranking is to the true ranking. The smaller this distance is, the closer the recovered ranking is to the ranking of true qualities. 
%The theoretically best \textit{Average L1} is $0$, and the smaller the value is, the better the model performs.
%\begin{equation}\label{eq:criteria3}
%%	\text{average L1: } \;
%%	\omega(P) = \sum_{i \in P}|\arg i - i|
%	\omega(x) = \sum_{i \in [N]}| \textup{Rank}^*(i|x^*) - \textup{Rank}(i|x) |
%\end{equation}
%where $\textup{Rank}^*(\cdot), \textup{Rank}(\cdot) $ are the functions that map the paper index to its ground-truth or recovered ranking.
%%\jw{just realize Kendall's coefficient is the more standard way to compare two ranking, but we probably do not have the time to modify at this point.}
%%\jw{Is this value independent of the size of $S$ and $T$?}
	
	\item \textbf{Average Precision (AP)} $\phi(S, T) = \frac{1}{p(|S|)}\sum_{i\in |S|} p(i)\cdot \one[i \in T] $ where $p(i)$ is the number of papers that are in $T$ and ranked at least as high as $i$ in $S$. It measures if the items in $T$ are indeed recovered with relatively high quality by the model as the accepted papers. The larger this metric is, the better.
%	\jw{It is possible to unify this definition with Precision by introducing the notion of Precision$@K$. Not sure if this metric make enough sense for our setting}

	\item \textbf{Normalized Discounted Cumulative Gain (NDCG)} $\psi(S,T) = \frac{\sum_{i\in |S|} \log^{-1} (\textup{Rank}(i|\mathbf{x})+1) \cdot \one[i \in T] }{\sum_{i\in |S|} \log^{-1} (\textup{Rank}(i|\mathbf{x})+1) }$. Similar to AP, it measures the ranking quality of the items that are both in $S$ and $T$. In particular, the smooth logarithmic discounting factor has a good theoretical basis~\cite{wang2013theoretical}. The larger this metric is, the more top papers in $T$ are recovered with high qualities in $S$.
\end{itemize}
Note that the average L1 metric does not apply to Bayesian model, because it is designed to estimate the likelihood of each paper getting accepted, and thus cannot provide reasonable quality estimation of the unaccepted papers.
In Table \ref{tab:rank}, we can see that our proposed LSC model with linear constraint still have the best performance in the three different ranking metrics. Meanwhile, the performance of QP in ranking metrics under the noiseless setting is close to perfect, which suggests that this baseline only accepts very few papers that should be rejected and ranked them relatively low among the accepted papers. The Bayesian model however is more robust, as its performance drops the least from noiseless to noisy setting. 

\begin{table*}[ht!]
	\begin{center}
	\begin{tabular}{ |c | c c c || c c c|}
	\hline
%	& \multicolumn{3}{c|}{Noiseless ($\sigma=0$)} & \multicolumn{3}{c|}{Noisy ($\sigma=0.5$)} \\
%	\hline
	\diagbox{Model}{Metric} & Avg. L1 & AP (\%) & NDCG (\%)  & Avg. L1 & AP (\%) & NDCG (\%)  \\
	\hline
	Average & 209.7 $\pm$ 6.2 & 60.8 $\pm$ 7.9 & 45.6 $\pm$ 4.4 & 206.6 $\pm$ 6.6 & 61.1 $\pm$ 6.6 & 45.8 $\pm$ 4.4\\
	QP   &  5.5 $\pm$ 2.2 & 99.8 $\pm$ 0.4 & 97.9 $\pm$ 1.2 &  79.7 $\pm$ 14.2 & 74.2 $\pm$ 9.0 & 68.9 $\pm$ 6.9\\
	Bayesian   &  N/A & 87.1 $\pm$ 5.7 & 75.9 $\pm$ 4.3 &  N/A & 83.2 $\pm$ 5.6 & 71.4 $\pm$ 4.0\\
	LSC (mono) & 34.4 $\pm$ 3.0  & 99.4 $\pm$ 0.3 & 93.9 $\pm$ 1.4 & 69.0 $\pm$ 2.8 & 89.1 $\pm$ 3.7 & 79.2 $\pm$ 2.4\\
	LSC (linear) & \textbf{0 $\pm$ 0} & \textbf{100 $\pm$ 0} & \textbf{100 $\pm$ 0} & \textbf{54.5 $\pm$ 2.0} & \textbf{95.4 $\pm$ 1.6} & \textbf{84.7 $\pm$ 2.3} \\
	\hline
	% \hline
	% Average & 204.6 $\pm$ 5.1 & 64.6 $\pm$ 7.5 & 47.4 $\pm$ 2.7 & 208.8 $\pm$ 6.4       & 60.7 $\pm$ 6.7   & 45.2 $\pm$ 4.2\\
	% QP  & 45.6 $\pm$ 2.1 & 94.7 $\pm$ 2.0 & 81.7 $\pm$ 2.4 & \textbf{76.5 $\pm$ 3.0}   & \textbf{87.0 $\pm$ 3.9}   & \textbf{75.3 $\pm$ 4.0} \\
	% Bayesian   &  N/A & 63.0 $\pm$ 5.0 & 52.0 $\pm$ 2.9 &  N/A & 56.6 $\pm$ 3.3 & 47.8 $\pm$ 2.6\\
	% LSC (mono) & 50.8 $\pm$ 2.2 & 91.8 $\pm$ 3.6 & 81.1 $\pm$ 3.5 & 85.5 $\pm$ 2.8	& 82.9 $\pm$ 5.3	& 73.2 $\pm$ 3.5 \\
	% LSC (linear) & 47.7 $\pm$ 2.0 & 92.4 $\pm$ 2.5	 & 81.2 $\pm$ 3.1 & 81.4 $\pm$ 2.6  & 84.2 $\pm$ 4.0	& 72.9 $\pm$ 2.8	\\
	% LSC (mixed) & \textbf{9.4 $\pm$ 0.6} & \textbf{94.0 $\pm$ 3.7} & \textbf{91.2 $\pm$ 4.1} & 80.2 $\pm$ 2.9 & 84.3 $\pm$ 4.4 & 74.0 $\pm$ 3.9  \\
	% \hline
	\end{tabular} \\
	\end{center}
	\caption{Experimental results for \textbf{linear scoring functions} setting on the Average, QP~\cite{Roos11}, Bayesian~\cite{Ge11}, and LSC with monotone, and  linear constraints. Each entry contains the mean and standard deviation over 20 trials. The table on the left and right side respectively shows the results in the \textbf{noiseless} ($\sigma=0$) and \textbf{noisy} ($\sigma=0.5$) setting. }
	\label{tab:rank}
\end{table*}

\subsection{More results on Peer-Grading Dataset}\label{append:peer-grading}
In this section, we include more results on the Peer-Grading Dataset. Each homework has about 250 submissions and 200 student reviewers; each submission have at least 6 reviews. However, due to the different settings and difficulty levels of each homework, the calibration results are slightly different. We here include all of these results in Figure \ref{fig:peer-grading}.

We can see that the performance of average baseline is generally good in the Precision metric, but bad in the ranking metric. This suggests it is important to dig into the ranking metric on how the recovered quality can match the ground truth order. Each baseline seems to have certain scenario that they can top the remaining models. And this potentially means our models designed for peer reviews may not be sufficient to model all the factors in the peer grading tasks. However, to our best effort, this is the only real-world dataset to test our model performance beyond synthesized dataset. And it is fair to conclude that \LSC (linear) does show the overall best and most stable calibration performance in different ranking metrics for different $n$. 

\begin{figure}
\includegraphics[width=\textwidth]{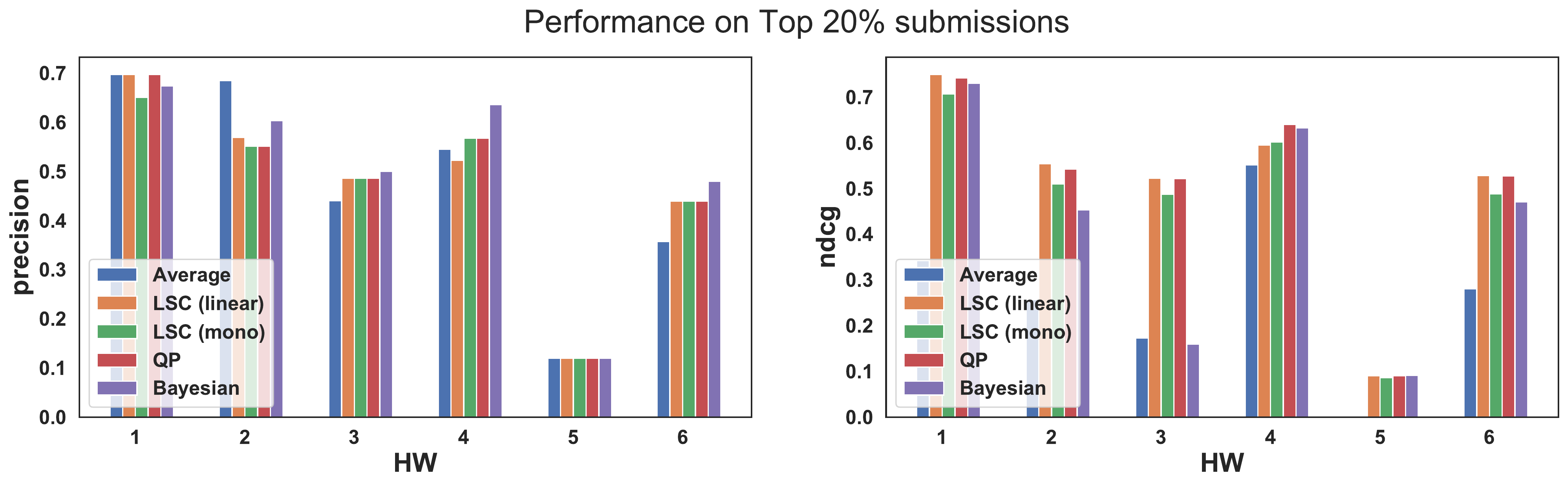}	
\includegraphics[width=\textwidth]{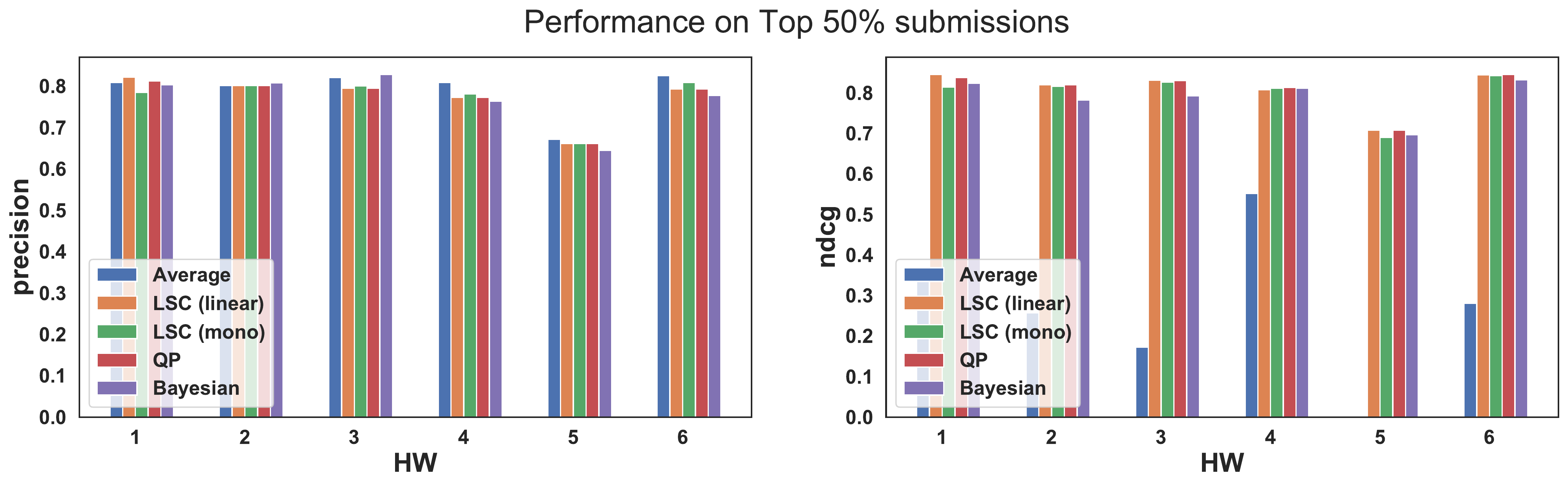}	
\caption{Experimental results on Peer-Grading dataset. In each plot, the performance of each model is compared in six different homework. $n$ is set as the 20\%, 50\% of total papers to be selected in the plots of each row from top to bottom. We use the metric, Precision, NDCG in the plots of each column from left to right.    }
\label{fig:peer-grading}
\end{figure}

\end{document}